\newcommand{\cmark}{\ding{51}}%
\newtheorem{defin}{Definition}
\newtheorem{lemma}{Lemma}
\crefname{section}{\S}{\S\S}
\Crefname{section}{\S}{\S\S}
\crefname{table}{Tab.}{}
\crefname{figure}{Fig.}{Figs.}
\crefname{algorithm}{Alg.}{}
\crefname{equation}{Eq.}{}
\crefname{appendix}{App.}{}
\crefname{theorem}{Thm.}{}
\crefname{proposition}{Proposition}{}
\crefname{defin}{Defn.}{}
\crefname{cor}{Corollary}{}
\crefname{observation}{Observation}{}
\crefname{assumption}{Assumption}{}
\newcommand\cincludegraphics[2][]{\raisebox{-0.3\height}{\includegraphics[#1]{#2}}}
\DeclareRobustCommand*{\escapeus}[1]{%
    \begingroup\@activeus\scantokens{#1\endinput}\endgroup}
\newcommand{\makesf}[1]{\textsf{{\escapeus{#1}}}}
\DeclareMathOperator*{\expect}{\mathbb{E}}
\DeclareMathOperator*{\argmax}{\mathrm{argmax}}
\DeclareMathOperator*{\argmin}{\mathrm{argmin}}
\definecolor{custommidnightblue}{HTML}{377EB8}
\definecolor{customforestgreen}{HTML}{4DAF4A}
\definecolor{customburntorange}{HTML}{DF813A}
\definecolor{custompurple}{HTML}{CF351B}
\newcommand{\colourbase}{black}
\newcommand{\colourword}{custompurple}        %
\newcommand{\colourunnormed}{customburntorange} %
\newcommand{\colourlocal}{custommidnightblue}  %
\newcommand{\colourglobal}{customforestgreen}  %
\newcommand{\mymacro}[2]{\newcommand{#1}{{\color{\colourbase}#2}}}
\newcommand{\myword}[2]{\newcommand{#1}{{\color{\colourword}#2}}}
\newcommand{\myunnormed}[2]{\newcommand{#1}{{\color{\colourunnormed}#2}}}
\newcommand{\mylocal}[2]{\newcommand{#1}{{\color{\colourlocal}#2}}}
\newcommand{\myglobal}[2]{\newcommand{#1}{{\color{\colourglobal}#2}}}
\myword{\word}{w}
\myword{\words}{\mathbf{w}}
\myword{\alphabet}{\Sigma}
\myword{\eos}{\mathrm{eos}}
\myword{\alphabeteos}{\overline{\alphabet}}
\myunnormed{\alphabetkeep}{\mathcal{D}}
\myunnormed{\alphabetkeeps}{\boldsymbol{\alphabetkeep}}
\myunnormed{\pythiamodel}{\mathcal{M}}
\newcommand{\alphabetkeeptopk}{\alphabetkeep_{\scaleto{\mathrm{\topkk=k'}}{4pt}}}
\newcommand{\alphabetkeeptopkwithval}[1]{\alphabetkeep_{\scaleto{\mathrm{\topkk=#1}}{4pt}}}
\newcommand{\alphabetkeeptopp}{\alphabetkeep_{\scaleto{\mathrm{\toppp=\pi'}}{4pt}}}
\newcommand{\alphabetkeeptoppwithval}[1]{\alphabetkeep_{\scaleto{\mathrm{\toppp=#1}}{4pt}}}
\myunnormed{\normconstant}{\mathcal{Z}}
\myunnormed{\decdistunormed}{\widehat{p}_{u}}
\mylocal{\decdistsubscript}{\alpha}
\mylocal{\decdist}{p_\alpha}
\myglobal{\decdistglobalsubscript}{\gamma}
\myglobal{\decdistglobal}{p_\gamma}
\mymacro{\topkname}{\mathrm{topk}}
\mymacro{\toppname}{\mathrm{topp}}
\mymacro{\topkk}{k}
\mymacro{\toppp}{\pi}
\mymacro{\vtheta}{\boldsymbol{\theta}}
\newcommand{\ptheta}{p_{\vtheta}}
\newcommand{\powerset}{\mathcal{P}}
\newcommand{\one}{\mathbbm{1}}
\mylocal{\constlocalnorm}{c_\alpha}
\myglobal{\constglobalnorm}{c_\gamma}
\myglobal{\constglobalnormcontext}{\widetilde{c}_\gamma}
\newcommand{\kl}{\mathrm{KL}}
\newcommand{\emptystring}{\emptyset}
\mymacro{\maxlength}{T}
\newcommand{\defeq}{\mathrel{\stackrel{\textnormal{\tiny def}}{=}}}
\newcommand{\mathcomment}[1]{\text{\textcolor{gray}{#1}}}
\mymacro{\imhstr}{\mathrm{imh}}
\newcommand{\pimh}{p_{\scaleto{\imhstr}{4pt}}}
\newcommand{\cmarkmath}{\text{\cmark}}
\newcommand{\imhcurrentstr}{\words}
\newcommand{\imhproposedstr}{\words'}
\newcommand{\puniform}{\texttt{Uniform}}
\newcommand{\iterations}{N}
\newcommand{\iterationslower}{n}
\title{Local and Global Decoding in Text Generation}
\newcommand{\luxembourgid}{{\includegraphics[scale=0.05]{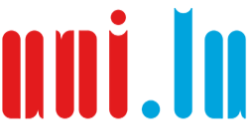}}}
\newcommand{\ethid}{{\includegraphics[scale=0.028]{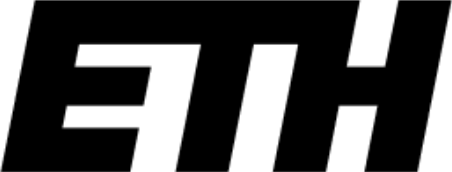}}}
\newcommand{\luxembourgprofemailadress}[1]{\href{mailto:#1@uni.lu}{\texttt{#1}}}
\newcommand{\luxembourgstudemailadress}[1]{\href{mailto:#1@gmail.com}{\texttt{#1}}}
\newcommand{\ethemailadress}[1]{\href{mailto:#1@inf.ethz.ch}{\texttt{#1}}}
\author{
Daniel Gareev,\textsuperscript{\luxembourgid} 
Thomas Hofmann,\textsuperscript{\ethid} 
Ezhilmathi Krishnasamy,\textsuperscript{\luxembourgid} 
Tiago Pimentel\textsuperscript{\ethid} \\
  $^\luxembourgid$University of Luxembourg, ~\,~  \textsuperscript{\ethid}ETH Z\"urich \\
  \luxembourgstudemailadress{daniel.gareev}@\texttt{gmail.com},~\,~
  \luxembourgprofemailadress{ezhilmathi.krishnasamy}@\texttt{uni.lu} \\
  \{%
  \ethemailadress{thomas\!.\!hofmann},
  \ethemailadress{tiago\!.\!pimentel}%
  \}@\texttt{inf\!.\!ethz\!.\!ch}
}
\begin{document}
\maketitle
\begin{abstract}
Text generation, a key component in applications such as dialogue systems, relies on decoding algorithms that sample strings from a language model distribution.
Traditional methods, such as top-$\topkk$ and top-$\toppp$, apply local normalisation to the model's output distribution, which can distort it.
In this paper, we investigate the effect of this distortion by introducing globally-normalised versions of these decoding methods.
Additionally, we propose an independent Metropolis-Hastings algorithm to approximate sampling from globally-normalised distributions without explicitly computing them.
Our empirical analysis compares the performance of local and global normalisation across two decoding algorithms (top-$\topkk$ and top-$\toppp$) with various hyperparameters, using Pythia language models.
Results show that, in most configurations, global decoding performs worse than the local decoding version of the same algorithms---despite preserving the distribution's integrity.
Our results suggest that distortion is an important feature of local decoding algorithms.

\vspace{5pt}
\begin{tblr}{colspec = {Q[c,m]|X[l,m]}, stretch = 0}
    \cincludegraphics[width=1.2em, keepaspectratio]{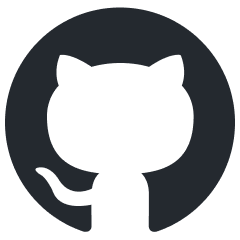}
     & \setstretch{.5}\href{https://github.com/lowlypalace/global-decoding}{{\makesf{lowlypalace/global-decoding}}} \\
\end{tblr}
\vspace{-5pt}
\end{abstract}

\newcommand{\defn}[1]{\textbf{#1}}

\section{Introduction}

Text generation is increasingly used in everyday applications, serving as a key component in dialogue agents like GPT-4 \cite{achiam2023gpt}.
Given a pre-trained language model (LM), text generation typically involves using \defn{decoding algorithms},\footnote{See \citet{welleck2024decoding} for a review.} 
which extract a string $\words$ from a language model $\ptheta$.
In open-ended text generation, these algorithms are usually stochastic, allowing users to \emph{sample} strings.
Importantly, the goal of text generation is to produce \defn{high-quality strings}, and a string's quality does not necessarily align with the probability mass assigned to it by a model \citep{holtzman2020curious,zhang-etal-2021-trading,meister-etal-2022-high}.

\begin{figure}[t]
    \centering
    \begin{tikzpicture}
        \node[anchor=north west] (image) at (0,0) {\includegraphics[trim={.8cm 1.4cm 1.8cm 1.6cm},clip,width=\columnwidth]{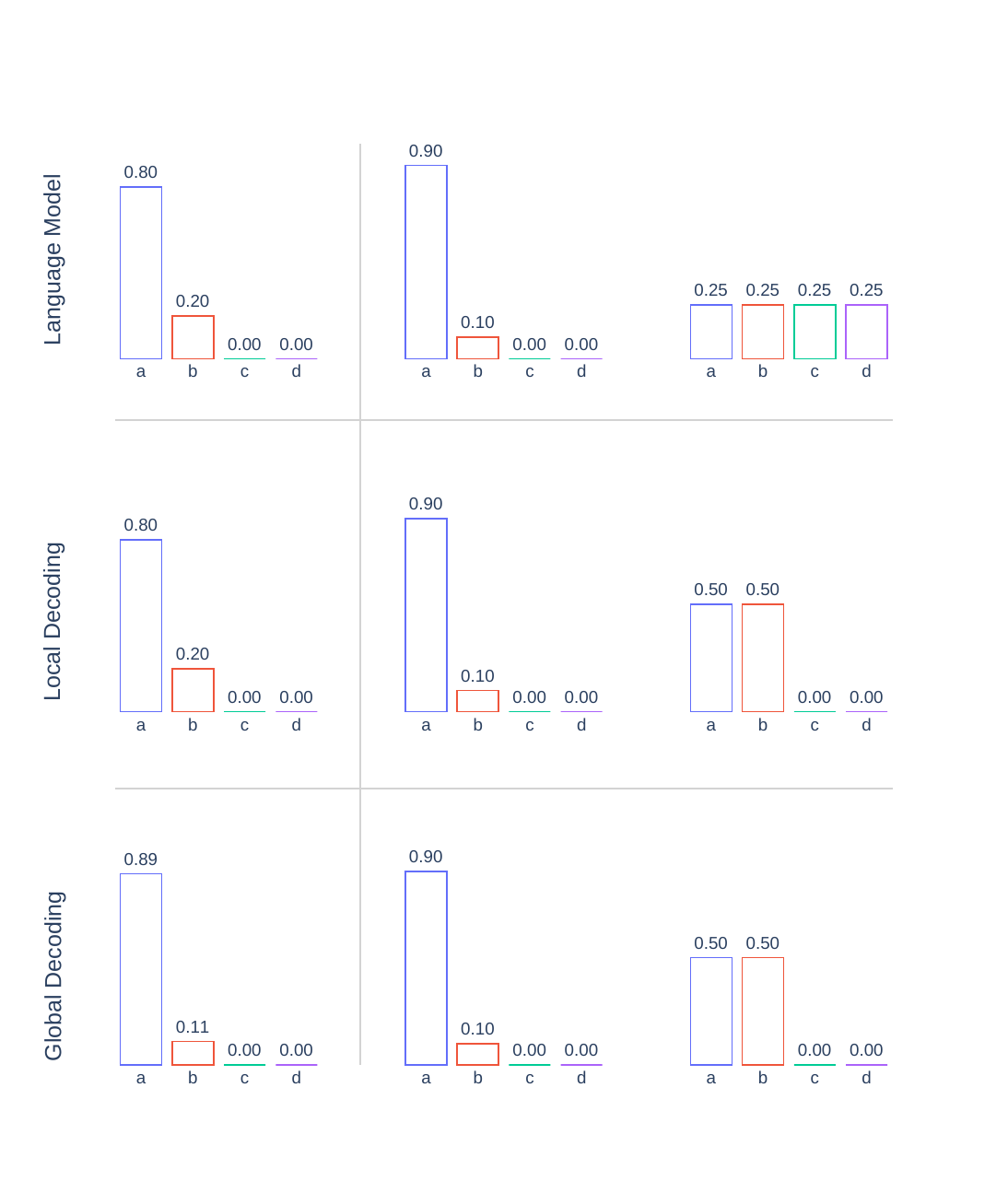}};
        \vspace{-20pt}

        \node at (1.7,-0.2) { \scalebox{.7}{ $p(\word_1 | "")$}};
        \node at (4.2,-0.2) { \scalebox{.7}{$p(\word_2 | a)$}};
        \node at (6.7,-0.2) {\scalebox{.7}{$p(\word_2 | b)$}};

        \draw [lightgray, dotted, thin] (6.3,-1.3) rectangle (7.7,-0.6);
        \node at (7,-0.8) { \scalebox{.55}{$\ptheta(ab) = 0.08$}};
        \node at (7,-1.1) { \scalebox{.55}{$\ptheta(ba) = 0.05$}};

        \draw [lightgray, dotted, thin] (6.3,-4) rectangle (7.7,-3.3);
        \node at (7,-3.5) {\scalebox{.55}{$\decdist(ab) = 0.08$}};
        \node at (7,-3.8) {\scalebox{.55}{$\decdist(ba) = 0.10$}};

        \draw [lightgray, dotted, thin] (6.2,-7.2) rectangle (7.7,-6.5);
        \node at (6.95,-6.7) { \scalebox{.55}{$\decdistglobal(ab) = 0.089$}};
        \node at (6.95,-7) { \scalebox{.55}{$\decdistglobal(ba) = 0.055$}};

        \draw [decorate,decoration={brace,amplitude=5pt,mirror,raise=2pt},yshift=-0.9cm]
            (0.6,-8.3) -- (2.9,-8.3) node [black,midway,yshift=-0.4cm] {\scalebox{.7}{$t=1$}};

        \draw [decorate,decoration={brace,amplitude=5pt,mirror,raise=2pt},yshift=-0.9cm]
            (3,-8.3) -- (7.7,-8.3) node [black,midway,yshift=-0.4cm] {\scalebox{.7}{$t=2$}};

    \end{tikzpicture}
    \vspace{-21pt}
    \caption{A simple language model over a four-symbol alphabet $\{a, b, c, d\}$ under local and global decoding. For this model, the probability of $ab$ is higher than $ba$; however, the opposite is true for local decoding. In this example, top-$\topkk$ is $2$ and maximum lengths are $T=2$.}
    \label{fig:distortion_example}
    \vspace{-10pt}
\end{figure}

Over recent years, several decoding algorithms have been proposed \citep[e.g., top-$\topkk$, top-$\toppp$;][]{fan-etal-2018-hierarchical,holtzman2020curious,basu2021mirostat,hewitt-etal-2022-truncation,meister-etal-2023-locally}.
Most of these methods operate in two steps: (i) they first prune the model's output, assigning zero probability to a large set of strings, and (ii) they then sample from this pruned distribution.
However, before sampling, the distribution must be renormalised, i.e., its values must be adjusted to sum to one.
This renormalisation is typically performed \emph{locally} for each context, meaning that each conditional $\ptheta(\cdot \mid \words_{<t})$ is renormalised independently; a process known as \defn{local normalisation}.
Importantly, by renormalising contexts independently, local normalisation can distort 
 distribution $\ptheta$ (see \cref{fig:distortion_example}).\looseness=-1

This paper explores the effects of such distortion on decoding algorithms. 
Specifically, we propose globally-normalised versions of classical decoding methods such as top-$\topkk$ and top-$\toppp$.
Globally-normalised methods prune the model's output identically to the locally-normalised version. 
However, instead of renormalising each context separately, they normalise the entire distribution $\ptheta(\cdot)$ at once; a process known as \defn{global normalisation}.
Unlike local normalisation, global normalisation does not distort the distribution, allowing us to examine how distortion affects decoding performance.

However, global normalisation is generally intractable as it requires computing a sum over an infinite set of strings.
To address this issue, we use 
a Markov chain Monte Carlo (MCMC) method to sample strings, adapting the independent Metropolis-Hastings (IMH) algorithm to the decoding setting.
As IMH only requires unnormalised probabilities, 
this method allows us to approximately sample from the globally-normalised distribution without explicitly computing it.

In our experiments, we compare locally and globally normalised versions of two decoding algorithms: top-$\topkk$ and top-$\toppp$.
We run these algorithms on Pythia models (ranging from 70m to 2.8b in size) with several hyperparameter configurations (8 settings for each algorithm; with $\topkk$ spanning from $5$ to $10{,}000$ and $\toppp$ from $0.01$ to $0.99$).
Our results show that globally-normalised methods generally perform worse than their locally-normalised counterparts, as evaluated by MAUVE scores.
Additionally, our results suggest that local normalisation leads to longer, less repetitive and overall higher-quality text.
We conclude that the distortion introduced by local decoding is an important component contributing to its performance.

\section{Language Modelling}
A language model, denoted as $\ptheta(\words)$ with parameters $\vtheta$, defines a probability distribution over the set of all finite strings $\words = \langle\word_1, \word_2, \dots, \word_{|\words|} \rangle \in \alphabet^*$, where $\alphabet$ represents an alphabet of subword units.
These models are generally defined autoregressively as:
\begin{align}
    \ptheta(\words) = \prod_{t=1}^{|\words| + 1} \ptheta(\word_t \mid \words_{<t})
\end{align}
Here, $\word_{|\words|+1}$ represents a special end-of-sequence symbol ($\eos \notin \alphabet$).
While $\ptheta(\words)$ represents a global distribution over $\alphabet^*$, the conditionals $\ptheta(\word_t \mid \words_{<t})$ describe local distributions over an $\eos$-augmented set of subword units $\alphabeteos \defeq \alphabet\cup\{\eos\}$.
Importantly, sampling from $\ptheta(\words)$ is straightforward, as it reduces to sampling iteratively from $\ptheta(\word_t \mid \words_{<t})$ until $\eos$ is selected.

In practice, users of language models typically impose a maximum string length $\maxlength$ to constrain the output of $\ptheta$.
To facilitate the upcoming discussion, we formally define such $\maxlength$-maxlength language models here.
\begin{defin}\label{defn:maxlength_lang_model}
    A \defn{$\boldsymbol{\maxlength}$-maxlength language model} is a LM for which any string $\words$ longer than $\maxlength$ has a probability of zero. Formally:
    \begin{align}
        |\words| > \maxlength \implies \ptheta(\words) = 0
    \end{align}
    This implies that for any $\maxlength$-length prefix $\words_{\leq \maxlength}$ with non-zero probability, $\ptheta(\eos \mid \words_{\leq \maxlength}) = 1$.
\end{defin}

\section{Decoding Algorithms}

In this section, we first introduce both local and global decoding algorithms.
We then discuss how these distributions compare to each other.

\subsection{Local Decoding}

Most \defn{decoding algorithms} define a \defn{pruning function}
$\alphabetkeep: \alphabet^* \to \powerset(\alphabeteos)$ which, given a prefix $\words_{<t}$, returns a subset of $\alphabeteos$ to retain in the distribution.\footnote{$\powerset(\alphabeteos)$ refers to the powerset of alphabet $\alphabeteos$.} 
Subwords which are not in $\alphabetkeep(\words_{<t})$ are then assigned a probability of zero.\footnote{Our description of local decoding algorithms is inspired by the sampling adapters framework \citep{meister-etal-2023-efficacy}.}
The pruning function is used to modify the LM's output distribution as follows:\looseness=-1%
\begin{align}
    &\decdistunormed(\word \mathop{\mid} \words_{<t}) \mathop{=} \ptheta(\word \mathop{\mid} \words_{<t})\, \one\{\word \mathop{\in} \alphabetkeep(\words_{<t})\} \label{eq:decision_rule}
\end{align}
Here, $\one\{\cdot\}$ is an 
indicator function that returns 1 if the condition holds and 0 otherwise.
Thus, $\decdistunormed(\word \mathop{\mid} \words_{<t})$ represents an \defn{unnormalised pruned distribution}, where each subword $\word \mathop{\notin} \alphabetkeep(\words_{<t})$ is re-assigned zero probability.
A \defn{local decoding algorithm} then normalises this distribution as:\footnote{We assume that $\alphabetkeep$ is defined such that the probabilities are well-defined for all contexts, i.e., $\forall \words_{<t} \in \alphabet^{*}$, we have $\sum_{\word' \in \alphabeteos} \decdistunormed(\word' \mid \words_{<t}) > 0$.}
\begin{subequations} \label{eq:local_decoding}
\begin{align}
    &\decdist(\word \mid \words_{<t}) = \frac{\decdistunormed(\word \mid \words_{<t})}{\sum_{\word' \in \alphabeteos} \decdistunormed(\word' \mid \words_{<t})} \\
    &\decdist(\words) = \prod_{t=1}^{|\words|+1} \decdist(\word_t \mid \words_{<t})
\end{align}
\end{subequations}
where we use subscript $\decdistsubscript$ to denote these distributions as locally normalised.
As with the original model $\ptheta(\words)$, sampling from $\decdist(\words)$ is straightforward: one can simply iteratively sample from the conditional $\decdist(\cdot \mid \words_{<t})$ at each time step.
Several popular decoding algorithms can be instantiated by defining the function $\alphabetkeep$.
We provide two examples next.

\begin{defin} \label{defn:topk_decoding}
    \defn{Top-k decoding} \citep{fan-etal-2018-hierarchical} is a local decoding algorithm with pruning function:%
    \begin{align}
        \alphabetkeeptopk(\words_{<t}) = &\argmax_{\alphabetkeep' \subseteq \alphabeteos} \sum_{\word \in \alphabetkeep'} \ptheta(\word \mid \words_{<t}),\\ 
        &\mathrm{s.t.}\, |\alphabetkeep'| = k' \nonumber
    \end{align}
\end{defin}

\begin{defin} \label{defn:topp_decoding}
    \defn{Top-$\boldsymbol{\pi}$ decoding} \citep{holtzman2020curious} is a local decoding algorithm with pruning function:
    \begin{align}
        \alphabetkeeptopp(\words_{<t}) = &\argmin_{\alphabetkeep' \subseteq \alphabeteos} |\alphabetkeep'|, \\
        &\mathrm{s.t.}\sum_{\word \in \alphabetkeep'} \ptheta(\word \mid \words_{<t}) \geq \pi' \nonumber
    \end{align}
\end{defin}

\subsection{Global Decoding}

Most decoding algorithms operate over $\ptheta(\words)$ as described above: 
they select a subset of strings to assign zero probability and then re-normalise this distribution \emph{locally}.
However, this re-normalisation process can also be done \emph{globally} instead.
We define \defn{global decoding algorithms} as:%
\begin{subequations} \label{eq:global_decoding}
\begin{align}
    &\decdistunormed(\words) = \prod\limits_{t=1}^{|\words|+1} \decdistunormed(\word_t \mid \words_{<t}) \\
    &\decdistglobal(\words) = \frac{\decdistunormed(\words)}{\sum_{\words' \in \alphabet^*} \decdistunormed(\words')} \label{eq:normalized_global}
\end{align}
\end{subequations}
where $\decdistglobalsubscript$ marks this distribution as globally normalised.
Unlike local normalisation, global normalisation does not \defn{distort} the distribution beyond the pruning process.
For any unpruned string, we have $\decdistglobal(\words) \propto \ptheta(\words)$.

\mymacro{\variationalfamily}{\mathcal{V}_{\maxlength}}

\subsection{Local vs.\ Global Decoding}
\label{sec:local_global_decoding}

The distributions $\decdistglobal(\words)$ and $\decdist(\words)$ can differ significantly.
They might, for instance, rank strings in the opposite order;
given two strings $\words$ and $\words'$, we might have: 
$\decdistglobal(\words) < \decdistglobal(\words')$ and
$\decdist(\words) > \decdist(\words')$.
Moreover, the Kullback-Leibler (KL) divergence between them can be arbitrarily large.
We now prove two theorems about these distributions.

\begin{restatable}{theorem}{kllowerbound}
\label{lemma:kl_lower_bound}
    Let $\variationalfamily$ be a set that includes all $\maxlength$-maxlength language models $\ptheta(\words)$ (see \cref{defn:maxlength_lang_model}).
    There exist language models $\ptheta \in \variationalfamily$, for which the top-$\topkk$ and top-$\toppp$ decoding versions $\decdistglobal(\words)$ and $\decdist(\words)$ have $\kl$s bounded below as:
    \begin{subequations}
    \begin{align}
        \sup_{\ptheta \in \variationalfamily}
        \kl\big(\decdistglobal(\words) \mid\mid \decdist(\words)\big)
        \in \Omega(\maxlength) \\
        \sup_{\ptheta \in \variationalfamily}
        \kl\big(\decdist(\words) \mid\mid \decdistglobal(\words)\big)
        \in \Omega(\maxlength)
    \end{align}
    \end{subequations}
    where $\Omega$ represents a lower bound in asymptotic notation.
\end{restatable}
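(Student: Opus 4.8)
The plan is to exhibit, for each $\maxlength$, a single $\maxlength$-maxlength model $\ptheta \in \variationalfamily$ whose locally- and globally-normalised decoders differ by a constant amount \emph{at every position}, so that both $\kl$s accumulate to $\Omega(\maxlength)$. The key realisation is that a one-off discrepancy between $\decdist$ and $\decdistglobal$ yields only an $O(1)$ gap; to reach $\Omega(\maxlength)$ the per-step distortion must compound, which forces the retained mass $Z_t(\words_{<t}) \defeq \sum_{\word'\in\alphabeteos}\decdistunormed(\word' \mid \words_{<t})$ to depend on the history $\words_{<t}$.

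First I would build a two-state gadget. At every step $t \le \maxlength$, from any reachable context, the conditional $\ptheta(\cdot\mid\words_{<t})$ is supported on a ``good'' symbol $g$, one or more ``bad'' symbols $b_1,\dots,b_{k-1}$, and a large pool of filler symbols that pruning always discards. The emitted symbol determines the next context: emitting $g$ leads to a \emph{narrow} context with high retained mass $R_N$, while emitting any $b_i$ leads to a \emph{wide} context with low retained mass $R_W$; the start context is narrow. For top-$\topkk$ I let the narrow context place all its mass on the $\topkk$ live symbols ($R_N=1$, no pruning) and the wide context place only mass $\rho$ on them with $1-\rho$ on pruned fillers ($R_W=\rho$). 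For top-$\toppp$ I instead pick the live-symbol probabilities so that top-$\toppp$ retains exactly $\{g,b\}$ with two distinct masses $R_N > R_W \ge \toppp$, which is realisable for every fixed $\toppp\in(0,1)$. In all cases I make the live symbols equiprobable within each context and set $r \defeq R_W/R_N \in (0,1)$ (so $r=\rho$ for top-$\topkk$); then, crucially, after local renormalisation the conditional $\decdist(\cdot\mid\words_{<t})$ is \emph{identical} in the narrow and wide contexts.

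I then compute both distributions in closed form. Because the local conditional is context-independent, $\decdist$ is a product measure: the symbols at positions $1,\dots,\maxlength$ are i.i.d., with the $g$-vs-$b$ indicator distributed as $\mathrm{Bernoulli}(\tfrac12)$. For the global decoder, writing $m(\words)$ for the number of $b$-symbols among the first $\maxlength-1$ positions, the retained-mass product telescopes to $\prod_{t} Z_t(\words_{<t}) = R_N^{\maxlength}\,r^{m(\words)}$, so $\decdistunormed(\words)=\decdist(\words)\,R_N^{\maxlength} r^{m(\words)}$ and hence $\decdistglobal(\words)\propto \decdist(\words)\, r^{m(\words)}$ once the constant $R_N^{\maxlength}$ cancels in the global normaliser. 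Since $r^{m(\words)}$ factors over positions, $\decdistglobal$ is \emph{also} a product measure, now with the $g$-vs-$b$ indicator distributed as $\mathrm{Bernoulli}(\tfrac{r}{1+r})$ at positions $1,\dots,\maxlength-1$; the choice among the $b_i$ and the symbol at the final position are identical under both measures and contribute nothing.

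The claim then follows by additivity of $\kl$ over independent coordinates:
\begin{align}
\kl\big(\decdistglobal(\words)\mid\mid\decdist(\words)\big) &= (\maxlength-1)\,\kl\big(\mathrm{Bernoulli}(\tfrac{r}{1+r})\mid\mid\mathrm{Bernoulli}(\tfrac12)\big), \nonumber \\
\kl\big(\decdist(\words)\mid\mid\decdistglobal(\words)\big) &= (\maxlength-1)\,\kl\big(\mathrm{Bernoulli}(\tfrac12)\mid\mid\mathrm{Bernoulli}(\tfrac{r}{1+r})\big) \nonumber
\end{align}
and each per-position term is a strictly positive constant because $\tfrac{r}{1+r}\neq\tfrac12$ whenever $r\neq1$. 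Both divergences are therefore $\Omega(\maxlength)$, and taking the supremum over $\ptheta\in\variationalfamily$ only increases them. I expect the main obstacle to be the pruning-consistency bookkeeping for top-$\toppp$: one must verify that the chosen conditionals retain exactly $\{g,b\}$ (neither collapsing to a single symbol nor admitting a filler) for every $\toppp\in(0,1)$, including $\toppp\ge\tfrac12$ where the narrow context can no longer keep two near-$\tfrac12$ symbols and $R_N$ must be pushed toward $1$ instead; the top-$\topkk$ case is comparatively routine once the filler pool is large enough that the live symbols dominate.
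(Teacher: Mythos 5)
Your construction is correct in its essentials, but it takes a genuinely different route from the paper's. The paper splits the theorem into two separate witness models: for the reverse $\kl$ (\cref{lemma:kl_lower_bound_reverse}) it uses a single branch point ($\ptheta(a)=x$ versus a uniform continuation tree below $b$), so that each of the $\maxlength-1$ steps on the $b$-branch contributes a fixed $\log\frac{1}{\constlocalnorm(\words_{<t})}>0$ and the bound follows in a few lines; for the forward $\kl$ (\cref{lemma:kl_lower_bound_forward}) it uses a comb of branch points at every depth and must sum geometric series and evaluate $\lim_{\maxlength\to\infty}\kl/\maxlength=-\log x$, which is considerably more laborious. Your two-state gadget instead arranges for the locally renormalised conditional to be context-independent while the retained mass depends on the previous symbol, so both $\decdist$ and $\decdistglobal$ become product measures differing only in a per-position Bernoulli parameter; $\kl$-additivity over independent coordinates then delivers the forward and reverse bounds simultaneously, with exact constants, from a single model. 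That is cleaner than the paper's forward-$\kl$ computation and makes explicit the (correct) intuition that the distortion must compound across positions. Two pieces of bookkeeping remain before this is airtight: (i) with $\topkk$ equiprobable live symbols the local $g$-indicator is $\mathrm{Bernoulli}(1/\topkk)$, not $\mathrm{Bernoulli}(1/2)$, and the global one is $\mathrm{Bernoulli}\bigl(\tfrac{1}{1+(\topkk-1)r}\bigr)$ --- your stated parameters are the $\topkk=2$ case, and the argument survives since the two parameters differ whenever $r\neq1$; (ii) for top-$\toppp$ the retained masses must lie in $[\toppp,2\toppp)\cap(0,1]$ (so that one live symbol is insufficient but two suffice), with each filler individually small enough that no two-element set containing a filler reaches mass $\toppp$; this interval is nondegenerate for every $\toppp\in(0,1)$, so the concern you flag about $\toppp\geq\tfrac12$ is resolvable. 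Note finally that both your construction and the paper's implicitly require $\topkk\geq2$, since for $\topkk=1$ the two decoders coincide and both divergences vanish.
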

\begin{proof}
    See \cref{app:proofkllowerbound}.
\end{proof}

This theorem states that there is at least one choice of $\ptheta(\words)$ for which the $\kl$ between local and global decoding distributions grows linearly with the maximum string length $\maxlength$.
Therefore, these distributions can differ considerably.
In the next theorem, we also provide an upper bound for the divergence between the two distributions.

\mymacro{\pmin}{p_{\mathrm{min}}}

\begin{restatable}{theorem}{klupperboundboth}
\label{lemma:kl_upper_bound_both}
    Let $\pmin$ be the minimum probability retained at each time step by either top-$\topkk$ (whose $\pmin = \frac{\topkk}{|\alphabeteos|}$) or top-$\toppp$ (whose $\pmin = \toppp$). 
    When using either of these decoding algorithms, both forward and reverse $\kl$s between $\decdistglobal(\words)$ and $\decdist(\words)$ are upper bounded by:
    \begin{subequations}
    \begin{align}
        \kl\big(\decdistglobal(\words) \mid\mid \decdist(\words)\big) \leq 
        \maxlength\,\log \frac{1}{\pmin}, \\
        \kl\big(\decdist(\words) \mid\mid \decdistglobal(\words)\big) \leq 
        \maxlength\,\log \frac{1}{\pmin}
    \end{align}
    \end{subequations}
    where $\ptheta(\words)$ is a $\maxlength$-maxlength language model.
\end{restatable}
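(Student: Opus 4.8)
The plan is to reduce both divergences to expressions involving only the normalisation constants and then bound those constants. Write $\constlocalnorm(\words_{<t}) \defeq \sum_{\word' \in \alphabeteos}\decdistunormed(\word' \mid \words_{<t})$ for the per-step local normaliser and $\constglobalnorm \defeq \sum_{\words'}\decdistunormed(\words')$ for the global one. Since $\decdist(\words) = \decdistunormed(\words)\big/\prod_{t=1}^{|\words|+1}\constlocalnorm(\words_{<t})$ whereas $\decdistglobal(\words) = \decdistunormed(\words)/\constglobalnorm$, the unnormalised density cancels in the ratio, giving
\[
\log\frac{\decdistglobal(\words)}{\decdist(\words)} = \sum_{t=1}^{|\words|+1}\log\constlocalnorm(\words_{<t}) \;-\; \log\constglobalnorm .
\]
Taking the expectation of this log-ratio under $\decdistglobal$ yields the forward $\kl$, and minus its expectation under $\decdist$ yields the reverse $\kl$, so both divergences are now controlled entirely by the $\constlocalnorm(\words_{<t})$'s and $\constglobalnorm$.

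The first ingredient is the two-sided bound $\pmin \le \constlocalnorm(\words_{<t}) \le 1$. The upper bound is immediate, as $\constlocalnorm$ sums a subset of the conditional $\ptheta(\cdot \mid \words_{<t})$. For the lower bound I would argue per algorithm: for top-$\toppp$ the retained set is chosen precisely so that its mass is at least $\toppp = \pmin$; for top-$\topkk$ I would use the averaging fact that the $\topkk$ largest among $|\alphabeteos|$ probabilities sum to at least $\frac{\topkk}{|\alphabeteos|} = \pmin$, since the mean of the top $\topkk$ values dominates the overall mean $\frac{1}{|\alphabeteos|}$. Taking logs gives $0 \le \log\frac{1}{\constlocalnorm(\words_{<t})} \le \log\frac{1}{\pmin}$ at every step.

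The second ingredient is counting the factors so that the bound reads $\maxlength$ rather than $\maxlength+1$. A length-$|\words|$ string has $|\words|+1$ conditional factors, but the $\maxlength$-maxlength property saves one: when $|\words| = \maxlength$ the final factor is $\ptheta(\eos \mid \words_{\le\maxlength}) = 1$, which both pruning rules retain, so $\constlocalnorm(\words_{\le\maxlength}) = 1$ and its log vanishes; when $|\words| < \maxlength$ there are already at most $\maxlength$ factors. Hence at most $\maxlength$ factors contribute, giving pointwise $\prod_{t}\constlocalnorm(\words_{<t}) \in [\pmin^{\maxlength},\,1]$. Since $\constglobalnorm = \expect_{\words\sim\decdist}\big[\prod_t \constlocalnorm(\words_{<t})\big]$, the same range holds for $\constglobalnorm$, i.e.\ $-\maxlength\log\frac{1}{\pmin} \le \log\constglobalnorm \le 0$.

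Combining the pieces closes both directions. For the reverse $\kl$, $\kl(\decdist \mid\mid \decdistglobal) = \log\constglobalnorm + \expect_{\decdist}\big[\sum_t \log\frac{1}{\constlocalnorm(\words_{<t})}\big] \le 0 + \maxlength\log\frac{1}{\pmin}$, using $\log\constglobalnorm \le 0$ and the per-step/counting bound. For the forward $\kl$, $\kl(\decdistglobal \mid\mid \decdist) = \expect_{\decdistglobal}\big[\sum_t \log\constlocalnorm(\words_{<t})\big] - \log\constglobalnorm \le 0 + \maxlength\log\frac{1}{\pmin}$, now using $\log\constlocalnorm \le 0$ together with $\log\constglobalnorm \ge -\maxlength\log\frac{1}{\pmin}$. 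I expect the main obstacle to be the bookkeeping in the two ingredients rather than the final algebra: proving the clean $\constlocalnorm(\words_{<t}) \ge \pmin$ bound for top-$\topkk$ via the averaging argument, and correctly accounting for the deterministic $\eos$ factor at maximal length so that the factor count is $\maxlength$ instead of $\maxlength+1$.
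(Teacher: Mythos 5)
Your proof is correct and follows essentially the same route as the paper: both arguments reduce the two KLs to the local and global normalisation constants, bound each local constant below by $\pmin$ (with the same per-algorithm justifications for top-$\topkk$ and top-$\toppp$), and use the $\maxlength$-maxlength property to show the final $\eos$ factor equals one so that at most $\maxlength$ factors of $\pmin$ appear. The only real difference is that where the paper lower-bounds $\constglobalnorm$ by $\pmin^{\maxlength}$ via a recursive expansion of the context-specific global constant (its \cref{lemma:bound_global_constant}), you obtain the same bound directly from the identity $\constglobalnorm = \expect_{\words\sim\decdist}\left[\prod_t \constlocalnorm(\words_{<t})\right]$ together with the pointwise bound $\prod_t \constlocalnorm(\words_{<t}) \geq \pmin^{\maxlength}$, which is a mild but genuine simplification of that one step.
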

\begin{proof}
    See \cref{app:proofklupperbound_both}.
\end{proof}

This second theorem upper-bounds the $\kl$ between local and global decoding in terms of $\pmin$, the minimum probability kept by the pruning function for any context $\words_{<t}$.
Notably, top-$\toppp$ typically uses hyperparameters that result in relatively large $\pmin$ values (e.g., $\toppp \in [.8, .9, .95]$), which may explain why these settings are preferred—choosing smaller $\toppp$ could lead to larger distortions. 
In contrast, top-$\topkk$ is usually applied with hyperparameters resulting in smaller $\pmin$ values (e.g., $\topkk \in [5, 10, 100]$).
Could its reduced distortion be what gives top-$\toppp$ an advantage over top-$\topkk$?
Our experiments aim to explore this and related questions.

\paragraph{A Note on Prior Work.}
Previous research has also explored differences between local and global normalisation in text generation.
For instance, \citet{goyal-etal-2019-empirical} note that---while locally and globally normalised language models are equally expressive in general---local normalisation may impose constraints on the strings returned by beam search.
Most similar to our work, \citet{zhang-etal-2021-trading} introduce a globally normalised version of temperature sampling and compare it to its locally normalised version.\footnote{We note this experiment is only present in the paper's arXiv version but not in the HumEval workshop version.}
However, their comparison between global and local decoding is less direct as they also impose a threshold on the maximum allowable probability of a sampled string.
To the best of our knowledge, this paper is the first to systematically compare global and local normalisation in text generation, focusing on the widely used top-$\topkk$ and top-$\toppp$ decoding methods.

\section{Sampling Strings with Independent Metropolis--Hastings}
\label{sec:imh}

As noted in our introduction, sampling strings directly from $\decdistglobal(\words)$ is generally intractable.
However, given access to the unnormalised $\decdistunormed(\words)$, Markov chain Monte Carlo algorithms allow us to approximate this sampling process.
Several such methods exist in the controllable text generation literature \citep[\textit{inter alia}]{miao2019cgmh,amini2023structured,forristal-etal-2023-block,lew2023sequential,du2023principled}.\footnote{
Non-MCMC-based methods also exist for controllable text generation \citep[e.g.,][]{yang-klein-2021-fudge}.
}
Here, we propose a new approach based on independent Metropolis-Hastings.
The method proceeds as follows: we first sample an initial string $\words^{(1)}$ from a \defn{proposal distribution} $\pimh(\words)$.
Then, over $\iterations$ iterations, we sample new proposal strings $\words'\sim\pimh(\words)$ and decide whether to accept the new string based on an \defn{accept--reject distribution} $\pimh(\cmarkmath \mathop{\mid} \imhproposedstr, \imhcurrentstr^{(\iterationslower-1)})$.
These steps are summarised in the following equations:
\begin{subequations} \label{eq:imh_steps}
\begin{align}
    &\words' \sim \pimh(\words) \\
    &x \sim \puniform([0, 1]) \\
    &\words^{(k)} \!=\! \left\{\!\!\!\begin{array}{lr}
         \words' & \!\!\!\!\!\!\!\!\!\! \texttt{if}\,x \leq \pimh(\cmarkmath \mathop{\mid} \imhproposedstr, \imhcurrentstr^{(\iterationslower-1)}) \\
         \words^{(\iterationslower-1)} & \texttt{else}
    \end{array}\right.\!\!\!\!
\end{align}
\end{subequations}
Finally, we take string $\words^{(\iterations+1)}$ as our sample.
Notably, if we define the accept--reject distribution as:\looseness=-1%
\begin{align}
    \pimh(\cmarkmath \mathop{\mid} \imhproposedstr, \imhcurrentstr) \mathop{\defeq} 
     \min\left(\!\! 1, \frac{\decdistunormed(\imhproposedstr)\, \pimh(\imhcurrentstr)}{\decdistunormed(\imhcurrentstr)\, \pimh(\imhproposedstr)}\right)
\end{align}
this procedure approximates sampling from $\decdistglobal(\words) \mathop{\propto} \decdistunormed(\words)$ for large enough $\iterations$.
The convergence rate of IMH depends on how closely $\pimh(\words)$ matches $\decdistglobal(\words)$ \citep{wang2021exact}.
As we do not have direct access to $\decdistglobal(\words)$, we instead use its locally normalised version as the proposal distribution:
\begin{align}
    &\pimh(\words) \defeq \decdist(\words) 
\end{align}
We present a pseudo-code of this method in \cref{alg:independent_metropolis_hastings}.

\begin{figure}
    \centering
\begin{minted}[escapeinside=||,mathescape=true]{python}
def ind_metropolis_hastings(|$\decdist$|, |$\decdistunormed$|, |$\iterations$|):
    |$\pimh$| = define_accept_reject(|$\decdist$|, |$\decdistunormed$|)
    |$\words$| = |$\decdist$|.sample()
    for |$\iterationslower$| in range(|$\iterations$|):
        |$\words^{\prime}$| = |$\decdist$|.sample()
        |$x$| = random.uniform(low=0.0, high=1.0)
        if u |$\leq \pimh(\cmarkmath \mid \words^{\prime}, \words)$|:
            |$\words = \words^{\prime}$|
    return |$\words$|
\end{minted}
    \vspace{-13pt}
    \caption{Pseudo-code for independent Metropolis--Hastings sampling.}
    \label{alg:independent_metropolis_hastings}
    \vspace{-7pt}
\end{figure}

\paragraph{A Note on Prior Work.}
Previous methods in controlled generation \citep[e.g.,][]{miao2019cgmh} similarly
sample strings according to \cref{eq:imh_steps}, with the primary difference being their definition of $\pimh(\words)$.
These methods are based on Metroplis-Hastings and define a \emph{conditional} proposal distribution $\pimh(\words \mid \words^{(\iterationslower-1)})$.
In contrast, we propose an \emph{independent} Metropolis-Hastings method for multiple reasons.
First, we can batch sample multiple strings from $\pimh(\words)$ simultaneously.
Second, we can compute both $\decdistunormed(\words)$ and $\pimh(\words)$ for a string in a single pass through the model, and simultaneously for multiple strings in a batch.
Third, some choices of decoding algorithms (e.g., top-$\topkk$ with small $\topkk$) assign zero probability to most strings in $\alphabet^*$. Depending on the choice of $\pimh(\words \mid \words^{(\iterationslower-1)})$ in prior work, some sequences may become unreachable from others, potentially breaking the convergence guarantees of Metropolis-Hastings.

\section{Experimental Setup}

As discussed above, our experiments are designed to compare local and global decoding algorithms, specifically evaluating the effects of local normalisation's distortion on decoding performance.
To achieve this, we generate a set of strings using pre-trained language models---particularly Pythia \citep{biderman2023pythia}---with either local or global decoding strategies.
Our experimental setup is flexible and can be applied to any local decoding algorithm. 
For this study, we focus on top-$\topkk$ and top-$\toppp$ decoding, exploring a broad range of configurations for each.
Below, we provide details on the key components of our experimental design, including the models we use, decoding algorithm configurations, IMH hyperparameters and evaluation protocols.

\begin{figure*}[t]
  \centering
  \begin{subfigure}[b]{\textwidth}
    \includegraphics[trim={.4cm .2cm 1.7cm 1.8cm},clip,width=\linewidth]{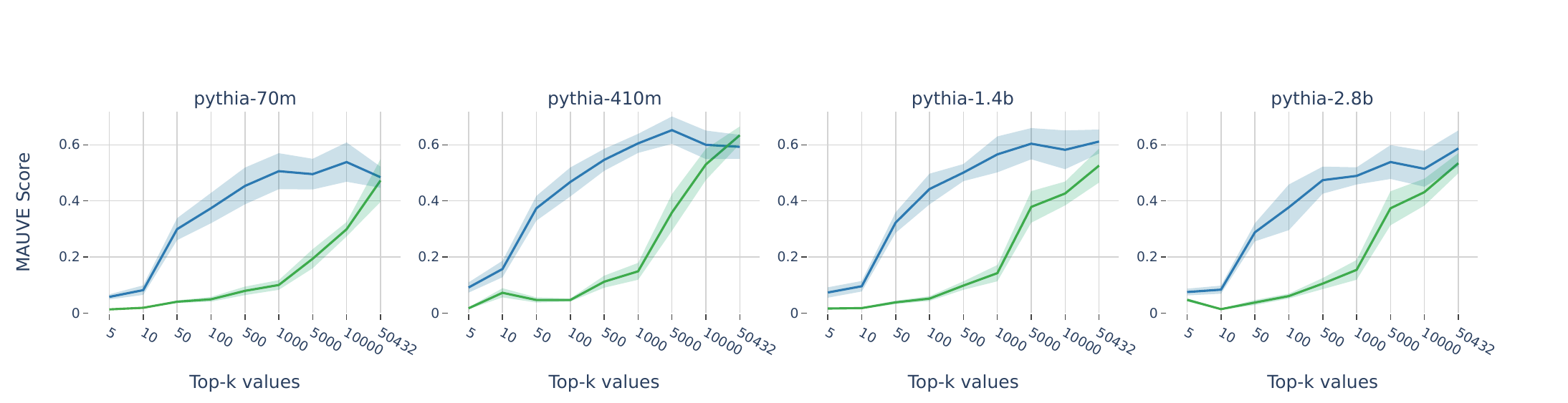}
  \end{subfigure}
  \begin{subfigure}[b]{\textwidth}
    \includegraphics[trim={.4cm 0 1.7cm 1.5cm},clip,width=\linewidth]{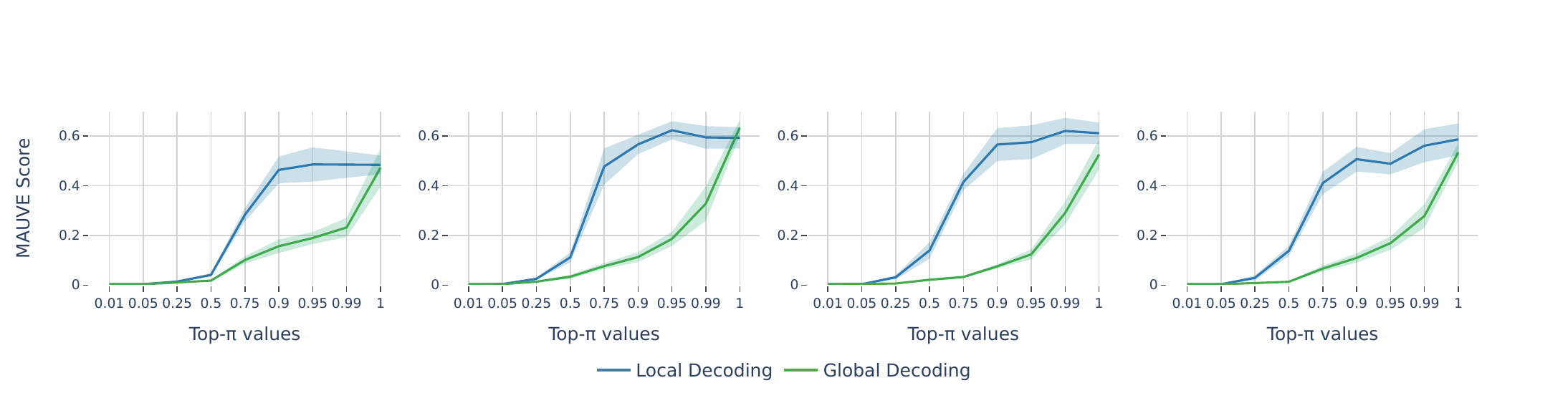}
  \end{subfigure}
  \vspace{-20pt}
  \caption{MAUVE evaluation scores when using local and global decoding with various top-$\topkk$ (Top) and top-$\toppp$ (Bottom) settings. 
Results are averaged over 10 runs, and 95\% confidence intervals are shown.}
  \label{fig:mauvescores}
\end{figure*}

\begin{figure*}[t]
  \centering
  \begin{subfigure}[b]{\textwidth}
    \includegraphics[trim={.2cm .2cm 1.7cm 1.5cm},clip,width=\linewidth]{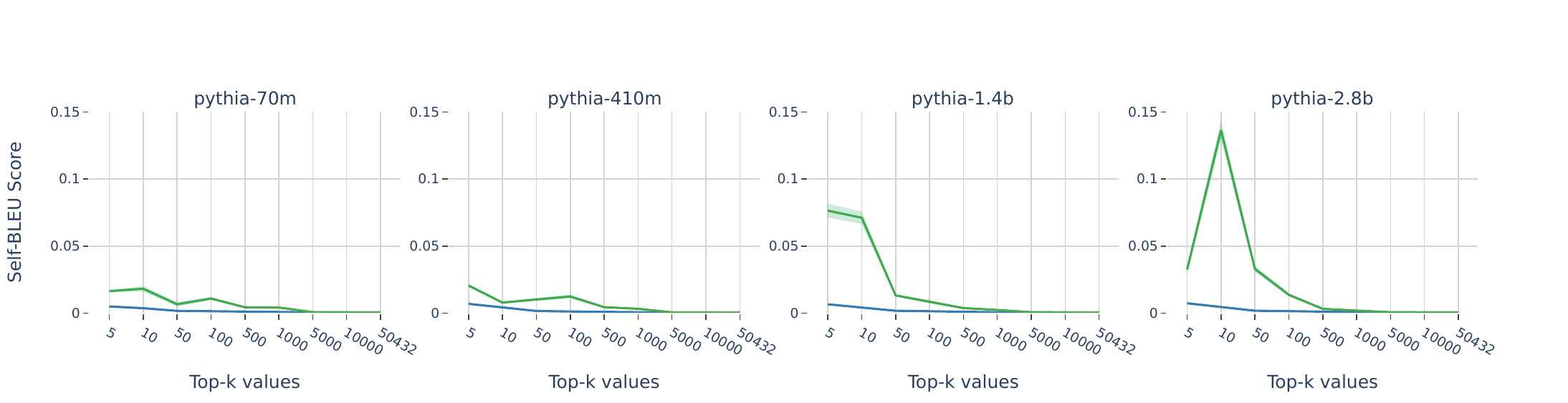}
  \end{subfigure}
  \begin{subfigure}[b]{\textwidth}
    \includegraphics[trim={.2cm 0 1.7cm 1.5cm},clip,width=\linewidth]{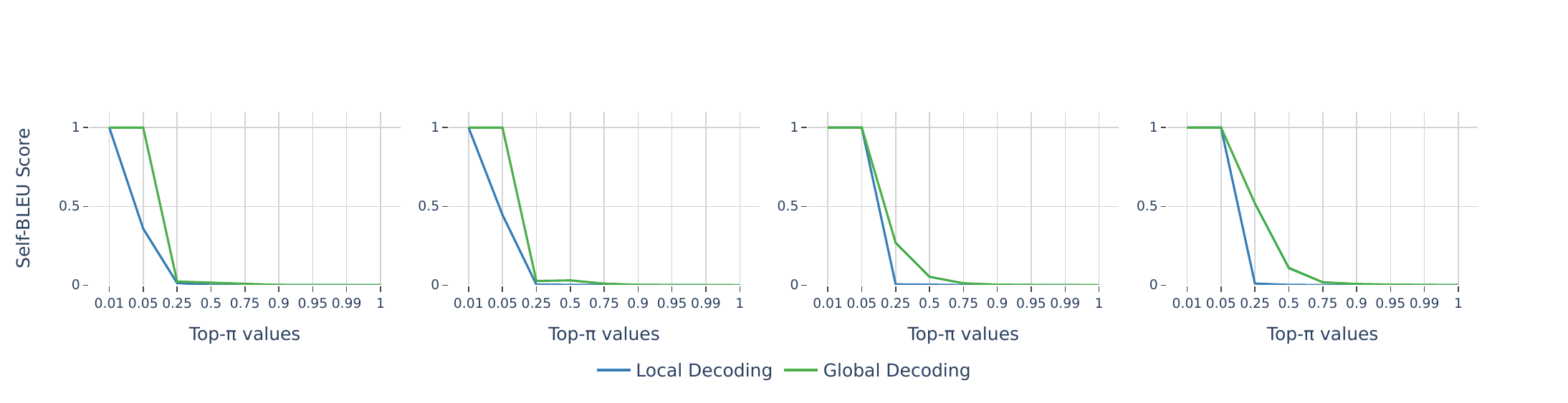}
  \end{subfigure}
  \vspace{-20pt}
  \caption{Self-BLEU evaluation scores when using local and global decoding with various top-$\topkk$ (Top) and top-$\toppp$ (Bottom) settings. 
Results are averaged over 10 runs, and 95\% confidence intervals are shown.}
  \label{fig:bleuscores}
\end{figure*}

\vspace{-2pt}
\subsection{Models}

For our experiments, we use Pythia \cite{biderman2023pythia}, a suite of open-source autoregressive (decoder-only) language models $\ptheta(\words)$.
Pythia models are well-suited for performing scientific experiments with controlled settings---such as studying decoding behaviours and analysing their impact on open-ended text generation quality.
Specifically, we use pythia-70m, pythia-410m, pythia-1.4b and pythia-2.8b.
All models are run in double precision to avoid numerical instability.

\vspace{-2pt}
\subsection{Decoding Configurations}

As mentioned above, we focus our experiments on top-$\topkk$ and top-$\toppp$.
We analyse several configurations of these methods.
We study top-$\topkk$ 
with $\topkk \in \{5,\allowbreak 10,\allowbreak 50, 100,\allowbreak 500,\allowbreak 1000,\allowbreak 5000,\allowbreak 10{,}000\}$ and
we explore top-$\toppp$ with
$\toppp \in \{0.01,\allowbreak 0.05,\allowbreak 0.25,\allowbreak 0.5,\allowbreak 0.75,\allowbreak 0.9,\allowbreak 0.95,\allowbreak 0.99\}$.
Additionally, we run experiments without applying any pruning strategy, which is equivalent to running top-$\topkk$ while keeping the entire vocabulary $\alphabetkeeptopkwithval{|\alphabet|}$ (composed of $50{,}432$ tokens), or top-$\toppp$ while keeping all probability mass $\alphabetkeeptoppwithval{1}$.
This serves as a baseline decoding strategy and corresponds to using $\alphabetkeep(\words_{<t}) = \alphabeteos$ in \cref{eq:decision_rule}, as no words are pruned by the decoding algorithm.

\vspace{-2pt}
\subsection{Text Sampling}

For text generation, Pythia models require at least one token in the input context.
We use the $\eos$ token as a prompt, which is a common practice for generating open-ended text continuations. 
We generate $100{,}000$ sequences for each pair of model--decoding configurations.
At each time step, tokens $\word_t$ are sampled from $p_\theta(\cdot \mid \words_{<t})$ until either the maximum length of $\maxlength = 512$ tokens is reached, or until the $\eos$ token is sampled.

\vspace{-2pt}
\subsection{IMH Hyperparameters}

To approximate sampling from  $\decdistglobal(\words)$, we use independent Metropolis--Hastings.
We run IMH algorithm $200$ times for each model and decoding configuration, using independent string samples for each.
Each IMH call is run for $\iterations=500$ iterations, thus requiring $100{,}000$ sequences in total (as $100{,}000 = 500 \cdot 200$).
We then select the last accepted sequence from each IMH call, resulting in 200 strings approximately sampled from $\decdistglobal$.

\subsection{Evaluation Methods}

We use MAUVE \cite{pillutla-etal:mauve:neurips2021} to evaluate the model-generated samples when using either local or global decoding. 
This metric measures the similarity between model-generated text and human-generated reference text. As human-generated reference text, we use the test set of the WebText dataset \cite{radford2019language}.
Higher MAUVE scores imply that model-generated strings are more similar to the human-generated reference samples, suggesting higher text quality.
MAUVE scores are reported in \cref{fig:mauvescores}, as well as in \cref{table:mauve} (in the appendix).\footnote{Although the values for $\alphabetkeeptopkwithval{|\alphabet|}$ and $\alphabetkeeptoppwithval{1.0}$ differ for local and global decoding, this is because the sequences for evaluation were sampled independently, even though the decoding methods coincide for this configuration.}
Additionally, we compute self-BLEU scores \cite{zhu2018texygen} to measure the diversity of generated texts.
Higher self-BLEU scores indicate lower diversity, meaning the generated samples are more similar to each other. 
Lower self-BLEU scores thus typically correspond to higher-quality text.
Self-BLEU scores are shown in \cref{fig:bleuscores}, as well as in \cref{table:bleu} (in the appendix).

\newcommand{\mauvelocal}{\scriptsize\textsc{Mauve\textsubscript{local}}\xspace}
\newcommand{\mauveglobal}{\scriptsize\textsc{Mauve\textsubscript{global}}\xspace}
\newcommand{\bleulocal}{\scriptsize\textsc{Bleu\textsubscript{local}}\xspace}
\newcommand{\bleuglobal}{\scriptsize\textsc{Bleu\textsubscript{global}}\xspace}

\subsection{Bootstrapping}

For each model-decoding configuration, we generate 100{,}000 sequence samples. These sequences are then (i) used to run IMH, producing 200 global decoding sequences, and (ii) subsampled without repetition to obtain 200 local decoding sequences. 
To compute confidence intervals, we apply bootstrapping to these samples. 
Specifically, we resample with replacement from the original 100{,}000 sequences to create a new set of 100,000 sequences. 
This resampling is done 10 times, after which we generate globally and locally decoded samples using the bootstrapped sets. 
We compute evaluation metrics (MAUVE and self-BLEU) based on these decoded sequences and report the mean values along with 95\% confidence intervals, derived from the 10 bootstrapped evaluation runs.

\section{Main Results}

In this section, we present and analyse our results. 
First, we evaluate locally and globally decoded text using Mauve and self-BLEU metrics.
Next, we examine the distortion caused by local normalisation and assess the impact of using IMH to sample from globally-normalised distributions.
Lastly, we provide a qualitative analysis of the generated text.

\subsection{Decoding Quality}

\Cref{fig:mauvescores} shows MAUVE scores for both local and global decoding.
As shown in this figure, local decoding algorithms generally achieve higher MAUVE scores across most configurations compared to global decoding algorithms. 
This implies that local normalisation produces more human-like text, with the difference particularly noticeable in the mid-range settings of  top-$\topkk$ or top-$\toppp$.
Notably, local and global decoding produce equivalent distributions when either $\topkk = |\alphabet|$, or $\toppp=1.0$. 
They are also equivalent when $\topkk=1$ or $\toppp \to 0$, where the pruning process retains only one string.
However, global decoding scores drop rapidly as $\topkk<|\alphabet|$ or $\toppp<1.0$, while local decoding scores even increase for some models.
This suggests that, while global normalisation preserves the overall distribution, local decoding's distortion may enhance text quality rather than degrade it.
Our results thus imply that these distortion artefacts introduced by local normalisation may improve text coherence and its resemblance to human-generated text.

\subsection{Text Repetitions}

\Cref{table:bleu} presents self-BLEU scores for text samples from both local and global decoding algorithms.
As shown, local decoding generally produces more diverse text, reflected by its lower self-BLEU scores.
Global normalisation thus appears to lead to \emph{less} coherent sequences with \emph{higher} repetition rates.
\begin{figure*}[t]
    \centering
    \includegraphics[trim={.1cm .2cm 1.5cm 2cm},clip,width=\linewidth]{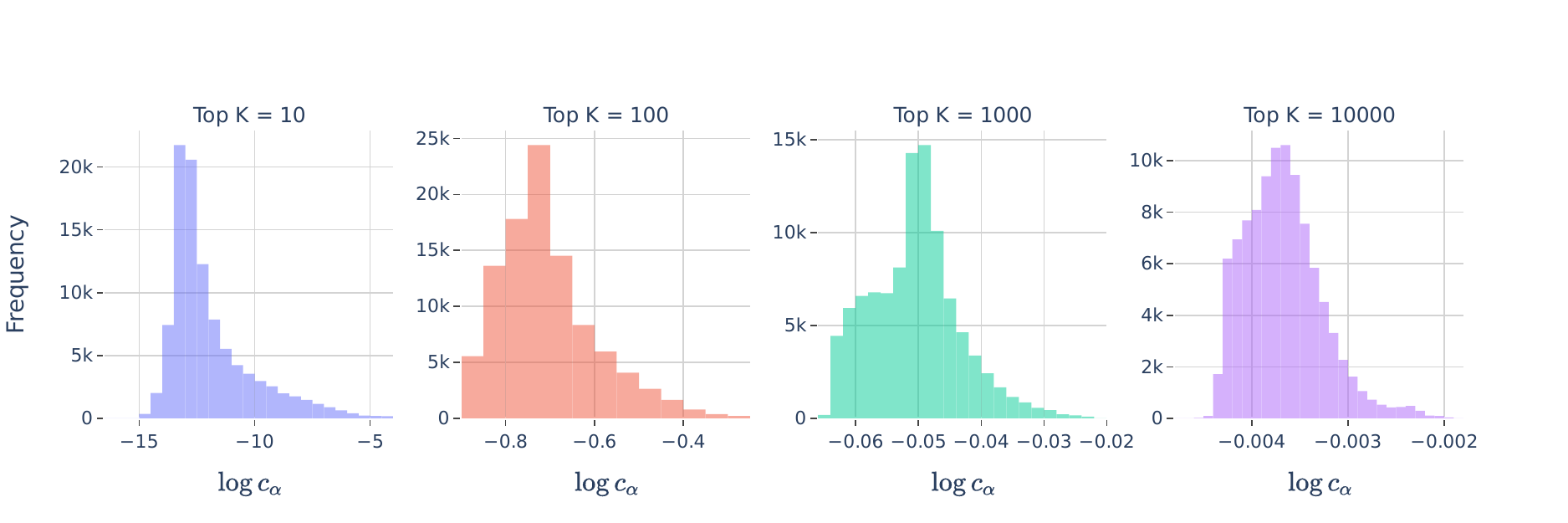}
    \caption{Histogram of local normalisation constants per sequence for pythia-2.8b when using different $\topkk$ values for top-$\topkk$.
    These constants $\constlocalnorm(\words)$ are defined in \cref{eq:seq_local_norm_const}.
    The $x$-axis is $\log$-scaled.}
    \label{fig:local_normalisation_constants}
\end{figure*}

\newcommand{\constlocalnormsequence}{\overline{\constlocalnorm}}

\subsection{Impact of Local Normalisation}

We now examine the distortion introduced by local normalisation, which highlights the differences between global and local decoding.
To do this, we present the sequence-level local normalisation constants in \cref{fig:local_normalisation_constants}.
These constants are the product of all subword-level local normalisation constants in a sequence, defined as:
\begin{align}\label{eq:seq_local_norm_const}
    \constlocalnormsequence(\words) \defeq 
    \prod_{t=1}^{|\words|} \underbrace{\sum_{\word' \in \alphabeteos} \decdistunormed(\word'\mid\words_{<t})}_{\constlocalnorm(\words_{<t}) \text{ in \cref{defn:local_constant}}}
\end{align}
Since these constants vary across different strings, they distort the distribution:
$\decdist(\words) = \frac{\decdistunormed(\words)}{\constlocalnormsequence(\words)} \not\propto \ptheta(\words)$.
\Cref{fig:local_normalisation_constants} shows that these constants are more uniform for top-$\topkk$ with large $\topkk$ values (note that the $x$-axis have different scales and are logged).\footnote{Also note that $\log \decdist(\words) = \log \decdistunormed(\words) - \log \constlocalnormsequence(\words)$.} 
This suggests that, as discussed in \cref{sec:local_global_decoding},  locally normalised distributions with these configurations are less distorted.
Combined with the results in \cref{fig:mauvescores}, our findings suggest that a \emph{moderate} level of distortion, may improve decoding performance.

\begin{figure}[t]
    \centering
    \includegraphics[trim={.6cm .2cm 1.7cm 2cm},clip,width=\linewidth]{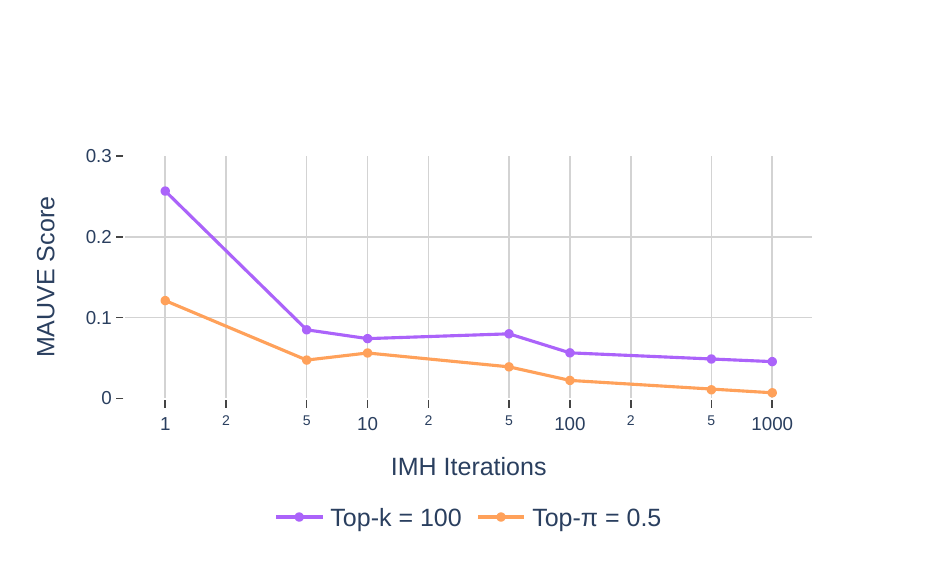}
    \caption{MAUVE Scores as the number of IMH iterations $\iterations$ changes. We evaluate global decoding with top-$\topkk$ ($\topkk = 100$) and top-$\toppp$ ($\toppp = 0.5$).
    We use 200 strings for each evaluation, except when $\iterations = 1000$, in which case we use only 100 strings. Strings were generated using pythia-2.8b.
    }
    \label{fig:mcmc_iterations_plot}
\end{figure}

\subsection{Effect of IMH Sampling on Decoding}

In this section, we examine the impact of IMH sampling on our results.
As discussed in \cref{sec:imh}, IMH offers a way to \emph{approximately} sample from distribution $\decdistglobal(\words)$, using only unnormalised scores $\decdistunormed(\words)$.
This method is approximate and converges to the target distribution as the number of iterations increases (i.e., as $\iterations\to\infty$).
On the other hand, running IMH with $\iterations=1$ is equivalent to sampling from the proposal distribution, which in our case is the local decoding distribution, $\decdist(\words)$.
In \cref{fig:mcmc_iterations_plot}, we show how MAUVE scores change with the number of IMH iterations used.
As this figure illustrates, increasing $\iterations$ from 1 to 10 significantly affects the results, but scores stabilise for $\iterations > 100$.
Therefore, even with approximate samples, we believe that our overall results are representative of the global distribution $\decdistglobal(\words)$.

\begin{figure*}[t]
    \centering
    \includegraphics[trim={.2cm .2cm 1.7cm 2cm},clip,width=\linewidth]{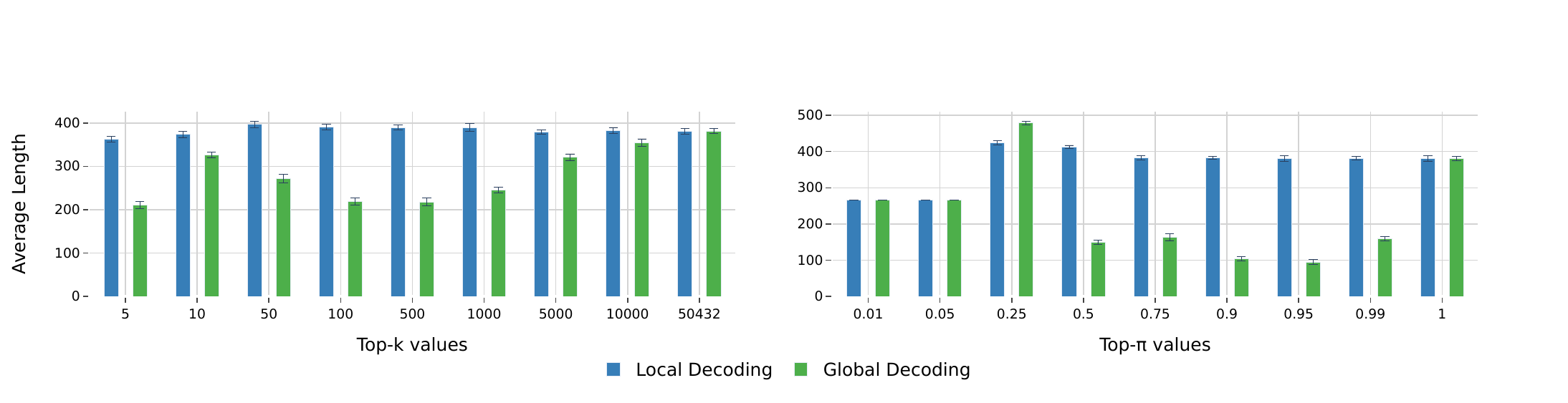}
    \vspace{-17pt}
    \caption{Average length of sequences using local and global decoding across  top-$\topkk$ (Left) and top-$\toppp$ (Right) decoding configurations. 
    The length of a sequence represents the number of tokens including $\eos$. 
    Sequences were generated with pythia-2.8b. 
    Results are averaged over 10 runs, and 95\% confidence intervals are shown.}
    \label{fig:seq_lengths}
\end{figure*}

\begin{figure*}[t]
    \centering
    \includegraphics[trim={0 .2cm 1.7cm 2cm},clip,width=\linewidth]{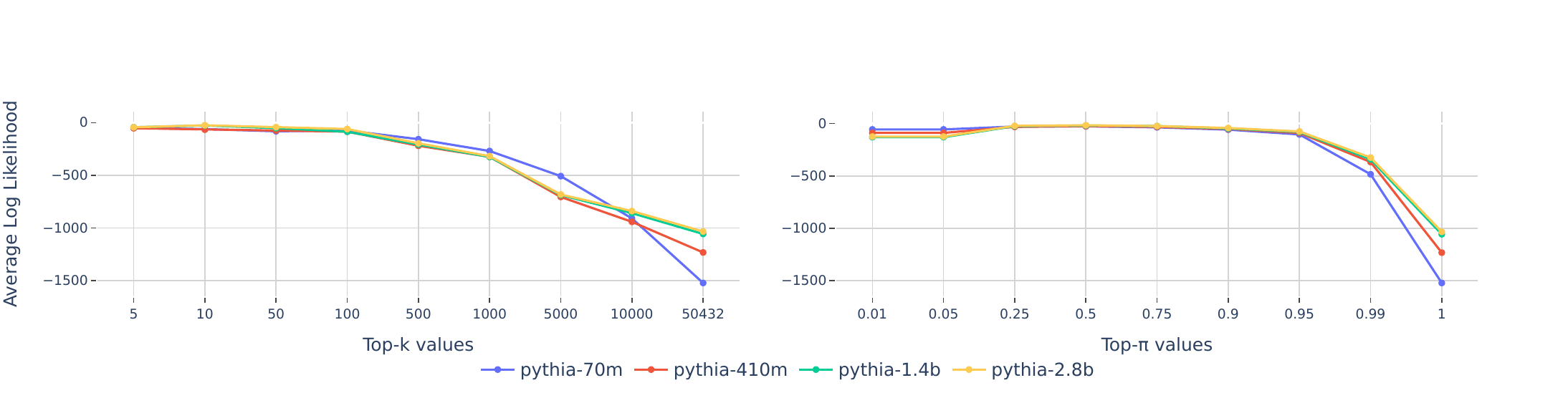}
    \vspace{-17pt}
    \caption{Log-likelihood of globally decoded samples under the original model's distribution $\ptheta$. 
    We sample sequences as: $\words\sim\decdistglobal(\words)$. 
    We then evaluate their log-probabilities as: $\log \ptheta(\words)$. Results are averaged over 10 runs.\looseness=-1}
    \label{fig:average_log_likelihood}
    \vspace{-5pt}
\end{figure*}

\begin{figure*}[t]
    \centering
    \includegraphics[trim={0 .2cm 1.7cm 2cm},clip,width=\linewidth]{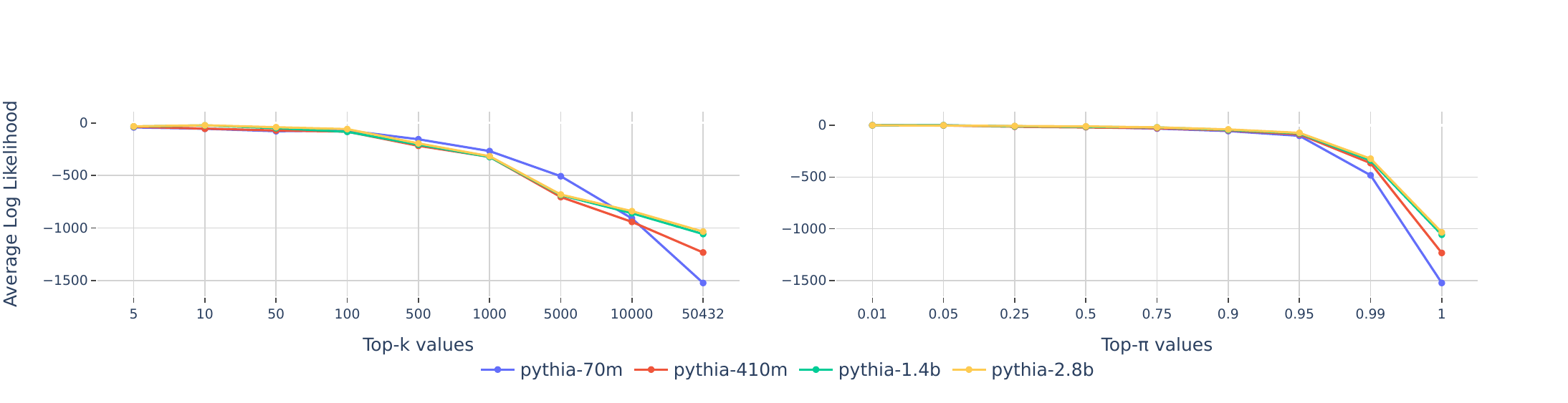}
    \vspace{-17pt}
    \caption{Log-likelihood of globally decoded samples under the local decoding distribution $\decdist$. 
    We sample sequences as: $\words\sim\decdistglobal(\words)$. 
    We then evaluate their log-probabilities as: $\log \decdist(\words)$. Results are averaged over 10 runs.\looseness=-1}
    \label{fig:average_log_likelihood_local}
\end{figure*}

\subsection{A Qualitative Analysis of Sampled Strings} 

In \cref{tab:samples} (shown in \cref{app:samples_table}), we present samples generated by global and local decoding using pythia-1.4b.
This table shows qualitative differences between the texts generated using the two methods.
In the case of local decoding, the text samples are generally coherent, with well-formed sentences and logical flow.
In contrast, global decoding often produces text with irrelevant symbols, code snippets, or disjointed phrases. 
Overall, the text generated by global decoding tends to lack the coherence and fluency observed in the local decoding samples.

\section{Discussion: Why Local Decoding Outperforms Global Decoding?}

Interestingly, despite the theoretical advantages of global decoding,  local decoding consistently outperforms it in most scenarios.
We now propose three key reasons for this discrepancy.

First, sequences generated through global decoding tend to be shorter on average (see \cref{fig:seq_lengths}).
As probabilities compound multiplicatively over the length of a sequence, and are constrained to values no greater than one, the overall probability of longer sequences decreases exponentially.
In contrast, local normalisation constants (used to normalise contextual probabilities in local decoding) are at least one. When compounded, they can thus lead to exponentially higher probabilities for longer sequences under $\decdist$.
As a result, local decoding thus tends to favour longer sequences when compared to global decoding.

Second, global decoding suffers from the well-known \defn{probability-quality paradox} in text generation \citep{holtzman2020curious,zhang-etal-2021-trading,meister-etal-2022-high}.
This paradox states that the most likely sequences according to a language model are not always the highest quality ones.
In fact, high-probability sequences frequently include repetitive or incoherent content.
Prior research has shown, for instance, that maximising likelihood during decoding, such as in beam search, often leads to degenerate and repetitive outputs \citep{holtzman2020curious}. 
In contrast, stochastic methods like top-$\toppp$ sampling produce more diverse, human-like text \citep{meister-etal-2022-high}. 
Similarly, \citet{stahlberg-byrne-2019-nmt} show that beam search often does not return the most likely string under a language model, and that these ``search errors'' improve its generated text quality.
These findings align with our results, which show that global decoding---by assigning higher probability to sequences which are already likely under the original model---tends to generate less coherent and lower-quality text.\footnote{
In \cref{fig:average_log_likelihood}, we show the log-likelihood of globally decoded sequences evaluated under the original language model's distribution $\ptheta$. 
This figure shows that for lower values of $\topkk$ or $\toppp$, the likelihoods under $\ptheta$ are higher.
\cref{fig:average_log_likelihood_local} then shows the same is true when likelihoods are evaluated using the locally normalised distribution.}

Lastly, language models often allocate significant probability mass to repetitive sequences.
Even under local decoding, small values of $\topkk$ can cause the model to get stuck in repetitive loops.
These loops typically have high local normalisation constants, as the repetitive continuations tend to be high probability and are often included top-$\topkk$ set (i.e., in $\alphabetkeeptopk$). 
In such cases, local decoding effectively discounts these sequences by adjusting their probabilities.
In contrast, global decoding does not apply this discount, giving these repetitive sequences a higher chance of being sampled. 
This results in less coherent and more repetitive text under global normalisation.

\section{Conclusion}

In this paper, we explored the effects of local and global normalisation on text generation quality, focusing on two popular decoding methods: top-$\topkk$ and top-$\toppp$.
We introduced the concept of global decoding algorithms and adapted local decoding algorithms to their global counterparts. 
Our empirical comparison revealed that, while global decoding preserves the original distribution, it often produces shorter, less coherent and more repetitive text than local decoding.
In contrast, local decoding---despite introducing distortions to the original distribution---typically results in more coherent text.
Our findings thus suggest that the distortion introduced by local decoding algorithms might be a beneficial feature rather than a flaw.
Future research on decoding algorithms should explore and evaluate the effects of local normalisation alongside pruning strategies.

\section*{Limitations}

As with any research project, our work has theoretical and empirical limitations.
Theoretically, the lower bounds presented in \cref{sec:local_global_decoding} are asymptotic and omit key constants required to fully understand how different decoding algorithms and their hyperparameters influence performance.
Additionally, we cannot directly sample from the global decoding distributions and rely on IMH for approximate sampling.
Although we examine the impact of this choice in our experiments (see \cref{fig:mcmc_iterations_plot}), it may still affect the results.
Empirically, our experiments are limited to Pythia models ranging in size from 70m to 2.8b.
Due to large number of samples required by our sampling methods and relatively large number of hyperparameter configurations analysed, we were limited to using smaller-sized models.
As models continue to improve, global distributions may become better calibrated, potentially yielding stronger results.
Additionally, we only run experiments in English. 
Extending this analysis to other languages is an important next step.
Furthermore, our focus was limited to open-ended text generation and applying our approach to other natural language tasks, such as summarisation, could provide further insights.
Finally, our evaluation primarily relies on automatic metrics such as MAUVE, and we do not perform human evaluations.
Although MAUVE scores have been shown to correlate well with human judgements \citep{pillutla-etal:mauve:neurips2021}, they are not a definitive standard. 
Human evaluations would offer a more comprehensive understanding of the results. 
Moreover, incorporating other automatic metrics, such as Zipf Coefficient \citep{holtzman2020curious} and Generation Perplexity \citep{fan-etal-2018-hierarchical}, could provide a more well-rounded assessment of decoding performance.

\section*{Acknowledgement}

We thank Pietro Lesci, Gregor Bachmann, Yahya Emara, Clara Meister, Afra Amini and Sotiris Anagnostidis for their feedback on earlier versions of this paper.
We also thank our anonymous reviewers.

\bibliography{custom}

\onecolumn
\appendix

\section{Detailed Results} \label{app:detail_results}

\subsection{MAUVE Scores}

\begin{table*}[h!]
\centering
\resizebox{\textwidth}{!}{
\begin{tabular}{lcccccccl}
\toprule
& 
\multicolumn{2}{c}{pythia-70m}
& 
\multicolumn{2}{c}{pythia-410m}
& 
\multicolumn{2}{c}{pythia-1.4b}
& 
\multicolumn{2}{c}{pythia-2.8b} \\
\cmidrule(lr){2-3}\cmidrule(lr){4-5}
\cmidrule(lr){6-7}\cmidrule(lr){8-9}
           & $\mauvelocal$  & $\mauveglobal$ & $\mauvelocal$  & $\mauveglobal$ & $\mauvelocal$  & $\mauveglobal$ & $\mauvelocal$  & $\mauveglobal$ \\\midrule
$\alphabetkeeptopkwithval{|\alphabet|}$  & 0.484 ± 0.075 & 0.473 ± 0.152 & 0.593 ± 0.085 & 0.634 ± 0.062 & 0.612 ± 0.085 & 0.526 ± 0.121 & 0.587 ± 0.128 & 0.534 ± 0.072 \\ \midrule
$\alphabetkeeptopkwithval{5.0}$     & 0.058 ± 0.018 & 0.013 ± 0.004 & 0.091 ± 0.036 & 0.017 ± 0.007 & 0.074 ± 0.037 & 0.017 ± 0.008 & 0.075 ± 0.024 & 0.048 ± 0.011 \\ 
$\alphabetkeeptopkwithval{10.0}$     & 0.082 ± 0.034 & 0.019 ± 0.002 & 0.157 ± 0.059 & 0.073 ± 0.033 & 0.096 ± 0.037 & 0.018 ± 0.006 & 0.084 ± 0.029 & 0.014 ± 0.004 \\
$\alphabetkeeptopkwithval{50.0}$     & 0.299 ± 0.078 & 0.041 ± 0.009 & 0.374 ± 0.088 & 0.047 ± 0.018 & 0.323 ± 0.073 & 0.039 ± 0.012 & 0.288 ± 0.064 & 0.038 ± 0.018 \\ 
$\alphabetkeeptopkwithval{100.0}$     & 0.374 ± 0.107 & 0.049 ± 0.017 & 0.467 ± 0.103 & 0.047 ± 0.011 & 0.442 ± 0.109 & 0.052 ± 0.017 & 0.377 ± 0.163 & 0.061 ± 0.017 \\ 
$\alphabetkeeptopkwithval{500.0}$     & 0.453 ± 0.131 & 0.079 ± 0.03 & 0.546 ± 0.078 & 0.112 ± 0.043 & 0.501 ± 0.061 & 0.098 ± 0.03 & 0.474 ± 0.097 & 0.105 ± 0.04 \\
$\alphabetkeeptopkwithval{1000.0}$     & 0.506 ± 0.128 & 0.1 ± 0.035 & 0.605 ± 0.068 & 0.149 ± 0.06 & 0.566 ± 0.128 & 0.143 ± 0.058 & 0.489 ± 0.061 & 0.154 ± 0.07 \\ 
$\alphabetkeeptopkwithval{5000.0}$     & 0.495 ± 0.109 & 0.194 ± 0.068 & 0.652 ± 0.098 & 0.359 ± 0.13 & 0.604 ± 0.111 & 0.379 ± 0.113 & 0.539 ± 0.121 & 0.374 ± 0.121 \\ 
$\alphabetkeeptopkwithval{10000.0}$     & 0.538 ± 0.14 & 0.299 ± 0.051 & 0.6 ± 0.102 & 0.53 ± 0.111 & 0.582 ± 0.138 & 0.427 ± 0.085 & 0.515 ± 0.128 & 0.431 ± 0.096 \\ \midrule

$\alphabetkeeptoppwithval{0.01}$     & 0.004 ± 0.0 & 0.004 ± 0.0 & 0.004 ± 0.0 & 0.004 ± 0.0 & 0.004 ± 0.0 & 0.004 ± 0.0 & 0.004 ± 0.0 & 0.004 ± 0.0 \\
$\alphabetkeeptoppwithval{0.05}$     & 0.004 ± 0.001 & 0.004 ± 0.0 & 0.005 ± 0.001 & 0.004 ± 0.0 & 0.004 ± 0.0 & 0.004 ± 0.0 & 0.004 ± 0.0 & 0.004 ± 0.0 \\ 
$\alphabetkeeptoppwithval{0.25}$     & 0.015 ± 0.004 & 0.011 ± 0.005 & 0.026 ± 0.008 & 0.014 ± 0.006 & 0.032 ± 0.017 & 0.007 ± 0.002 & 0.03 ± 0.02 & 0.009 ± 0.001 \\
$\alphabetkeeptoppwithval{0.5}$     & 0.041 ± 0.014 & 0.019 ± 0.005 & 0.112 ± 0.039 & 0.034 ± 0.015 & 0.14 ± 0.066 & 0.022 ± 0.007 & 0.138 ± 0.041 & 0.014 ± 0.004 \\
$\alphabetkeeptoppwithval{0.75}$     & 0.283 ± 0.054 & 0.101 ± 0.029 & 0.478 ± 0.146 & 0.077 ± 0.023 & 0.415 ± 0.066 & 0.033 ± 0.009 & 0.411 ± 0.09 & 0.067 ± 0.023 \\ 
$\alphabetkeeptoppwithval{0.9}$     & 0.464 ± 0.108 & 0.157 ± 0.055 & 0.566 ± 0.078 & 0.113 ± 0.04 & 0.566 ± 0.132 & 0.076 ± 0.015 & 0.507 ± 0.099 & 0.11 ± 0.037 \\ 
$\alphabetkeeptoppwithval{0.95}$     & 0.486 ± 0.138 & 0.19 ± 0.048 & 0.623 ± 0.073 & 0.186 ± 0.056 & 0.575 ± 0.135 & 0.124 ± 0.041 & 0.489 ± 0.086 & 0.169 ± 0.053 \\ 
$\alphabetkeeptoppwithval{0.99}$     & 0.484 ± 0.107 & 0.232 ± 0.077 & 0.595 ± 0.091 & 0.328 ± 0.138 & 0.621 ± 0.105 & 0.291 ± 0.091 & 0.561 ± 0.132 & 0.278 ± 0.095 \\
$\alphabetkeeptoppwithval{1.0}$     & 0.484 ± 0.075 & 0.473 ± 0.152 & 0.593 ± 0.085 & 0.634 ± 0.062 & 0.612 ± 0.085 & 0.526 ± 0.121 & 0.587 ± 0.128 & 0.534 ± 0.072 
\\\bottomrule
\end{tabular}
}
  \caption{MAUVE evaluation scores when using local and global decoding with various top-$\topkk$ and top-$\toppp$ settings. 
  Results are averaged over 10 runs, and 95\% confidence intervals are shown.}
\label{table:mauve}
\end{table*}

\subsection{Self-BLEU Scores}

\begin{table*}[h!]
\centering
\resizebox{\textwidth}{!}{%
\begin{tabular}{lcccccccl}
\toprule
& 
\multicolumn{2}{c}{pythia-70m}
& 
\multicolumn{2}{c}{pythia-410m}
& 
\multicolumn{2}{c}{pythia-1.4b}
& 
\multicolumn{2}{c}{pythia-2.8b} \\
\cmidrule(lr){2-3}\cmidrule(lr){4-5}
\cmidrule(lr){6-7}\cmidrule(lr){8-9}
& $\bleulocal$  & $\bleuglobal$ & $\bleulocal$  & $\bleuglobal$ & $\bleulocal$  & $\bleuglobal$ & $\bleulocal$  & $\bleuglobal$ \\\midrule
$\alphabetkeeptopkwithval{|\alphabet|}$    & 0.0006 ± 0.0001 & 0.0005 ± 0.0002 & 0.0005 ± 0.0002 & 0.0005 ± 0.0001 & 0.0005 ± 0.0001 & 0.0005 ± 0.0002 & 0.0005 ± 0.0001 & 0.0005 ± 0.0002 \\ \midrule
    $\alphabetkeeptopkwithval{5.0}$     & 0.005 ± 0.0007 & 0.0164 ± 0.0014 & 0.0069 ± 0.0007 & 0.0206 ± 0.0021 & 0.0066 ± 0.0008 & 0.0765 ± 0.0101 & 0.0074 ± 0.0007 & 0.0325 ± 0.0056 \\ 
$\alphabetkeeptopkwithval{10.0}$     & 0.0037 ± 0.0007 & 0.0182 ± 0.0041 & 0.0044 ± 0.0008 & 0.0079 ± 0.0013 & 0.0044 ± 0.0003 & 0.0712 ± 0.0095 & 0.0047 ± 0.0004 & 0.1365 ± 0.0145 \\ 
$\alphabetkeeptopkwithval{50.0}$     & 0.0017 ± 0.0004 & 0.0066 ± 0.0032 & 0.0016 ± 0.0003 & 0.0102 ± 0.002 & 0.0018 ± 0.0004 & 0.0132 ± 0.0014 & 0.0018 ± 0.0003 & 0.033 ± 0.005 \\ 
$\alphabetkeeptopkwithval{100.0}$     & 0.0014 ± 0.0002 & 0.0109 ± 0.0022 & 0.0012 ± 0.0003 & 0.0124 ± 0.0026 & 0.0014 ± 0.0004 & 0.0084 ± 0.0022 & 0.0015 ± 0.0003 & 0.0137 ± 0.0021 \\ 
$\alphabetkeeptopkwithval{500.0}$     & 0.0008 ± 0.0001 & 0.0044 ± 0.0008 & 0.0008 ± 0.0002 & 0.0045 ± 0.0008 & 0.0008 ± 0.0001 & 0.0038 ± 0.0009 & 0.0009 ± 0.0002 & 0.0032 ± 0.0008 \\ 
$\alphabetkeeptopkwithval{1000.0}$     & 0.0008 ± 0.0002 & 0.0042 ± 0.001 & 0.0006 ± 0.0002 & 0.0033 ± 0.001 & 0.0007 ± 0.0001 & 0.0024 ± 0.0007 & 0.0008 ± 0.0002 & 0.0017 ± 0.0005 \\ 
$\alphabetkeeptopkwithval{5000.0}$     & 0.0006 ± 0.0001 & 0.0008 ± 0.0003 & 0.0005 ± 0.0002 & 0.0006 ± 0.0002 & 0.0005 ± 0.0001 & 0.0007 ± 0.0002 & 0.0005 ± 0.0001 & 0.0007 ± 0.0002 \\ 
$\alphabetkeeptopkwithval{10000.0}$     & 0.0006 ± 0.0001 & 0.0006 ± 0.0001 & 0.0005 ± 0.0002 & 0.0006 ± 0.0001 & 0.0005 ± 0.0002 & 0.0006 ± 0.0002 & 0.0006 ± 0.0002 & 0.0006 ± 0.0002 \\ \midrule

$\alphabetkeeptoppwithval{0.01}$     & 1.0 ± 0.0 & 1.0 ± 0.0 & 1.0 ± 0.0 & 1.0 ± 0.0 & 1.0 ± 0.0 & 1.0 ± 0.0 & 1.0 ± 0.0 & 1.0 ± 0.0 \\ 
$\alphabetkeeptoppwithval{0.05}$     & 0.3584 ± 0.0104 & 1.0 ± 0.0 & 0.4466 ± 0.0128 & 1.0 ± 0.0 & 1.0 ± 0.0 & 1.0 ± 0.0 & 1.0 ± 0.0 & 1.0 ± 0.0 \\ 
$\alphabetkeeptoppwithval{0.25}$     & 0.0135 ± 0.0024 & 0.0234 ± 0.0045 & 0.0035 ± 0.0006 & 0.0268 ± 0.0058 & 0.0056 ± 0.001 & 0.2687 ± 0.0218 & 0.0105 ± 0.0018 & 0.5187 ± 0.0204 \\ 
$\alphabetkeeptoppwithval{0.5}$     & 0.002 ± 0.0004 & 0.0165 ± 0.003 & 0.0014 ± 0.0003 & 0.0306 ± 0.0048 & 0.0016 ± 0.0003 & 0.0534 ± 0.0116 & 0.0021 ± 0.0003 & 0.1093 ± 0.0228 \\ 
$\alphabetkeeptoppwithval{0.75}$     & 0.0016 ± 0.0002 & 0.0132 ± 0.0041 & 0.0012 ± 0.0002 & 0.0103 ± 0.003 & 0.0012 ± 0.0001 & 0.0123 ± 0.0018 & 0.0013 ± 0.0002 & 0.0188 ± 0.004 \\ 
$\alphabetkeeptoppwithval{0.9}$     & 0.001 ± 0.0001 & 0.0019 ± 0.0004 & 0.0007 ± 0.0001 & 0.0026 ± 0.001 & 0.0009 ± 0.0001 & 0.0038 ± 0.0008 & 0.0008 ± 0.0001 & 0.008 ± 0.0014 \\ 
$\alphabetkeeptoppwithval{0.95}$     & 0.0007 ± 0.0001 & 0.0006 ± 0.0002 & 0.0006 ± 0.0002 & 0.0011 ± 0.0005 & 0.0006 ± 0.0001 & 0.0017 ± 0.0005 & 0.0008 ± 0.0002 & 0.004 ± 0.0015 \\ 
$\alphabetkeeptoppwithval{0.99}$     & 0.0006 ± 0.0001 & 0.0002 ± 0.0001 & 0.0005 ± 0.0003 & 0.0003 ± 0.0001 & 0.0005 ± 0.0001 & 0.0005 ± 0.0002 & 0.0006 ± 0.0001 & 0.0005 ± 0.0001 \\ 
$\alphabetkeeptoppwithval{1.0}$     & 0.0006 ± 0.0001 & 0.0005 ± 0.0002 & 0.0005 ± 0.0002 & 0.0005 ± 0.0001 & 0.0005 ± 0.0001 & 0.0005 ± 0.0002 & 0.0005 ± 0.0001 & 0.0005 ± 0.0002 

\\\bottomrule
\end{tabular}
}
\vspace{-2pt}
\caption{Self-BLEU evaluation scores when using local and global decoding with various top-$\topkk$ and top-$\toppp$ settings. 
  Results are averaged over 10 runs, and 95\% confidence intervals are shown.}
\label{table:bleu}
\vspace{-3pt}
\end{table*}

\clearpage

\subsection{Samples of Strings} \label{app:samples_table}

\newcommand{\tablelinespacing}{7pt}
\begin{table*}[h!]
    \centering
    \label{crouch}
    \resizebox{\textwidth}{!}{%
    \begin{tabular}{  l  p{8cm}  p{8cm} }
        \toprule
& \multicolumn{1}{c}{\textbf{Local Decoding}}
& \multicolumn{1}{c}{\textbf{Global Decoding}} \\\midrule
$\alphabetkeeptopkwithval{|\alphabet|}$  
&        
\detokenize{
Xmas is exciting this year with the discovery of the new Dave.\n\nBelow you can see the new Dave in a 
}
& 
\detokenize{
Q:\n\nHow to start VLC using jars from android\n\nI am trying to start VLC using JPMS provided by Wizard
}
\\\midrule
$\alphabetkeeptopkwithval{5}$  
&        
\detokenize{
Q:\n\nHow to use the new.NET 4.0 framework with Mono 2.8.7 on Ubuntu 11.10?\n\nHow can I use the new.NET
}
& 
\detokenize{
\n#ifndef BOOST_MPL_AUX_TEMPLATE_ARIT
Y_HPP_INCLUDED\n#define BOOST_MPL_AUX_TEMPLATE_ARITY_HPP_IN
CLUDED
}
\\[\tablelinespacing]
$\alphabetkeeptopkwithval{10}$ 
&        
\detokenize{
\n\nI just read about the first time the UW football team beat Iowa, and I thought it was really cool:
}
& 
\detokenize{
1\n
}
\\[\tablelinespacing]
$\alphabetkeeptopkwithval{50}$  
&        
\detokenize{
Hurricane Dorian has left at least 15 dead and hundreds missing in the Bahamas as it approaches the
}
& 
\detokenize{
#ifndef OSQA_CONFIG_H_\n#define OSQA_CONFIG_H_\n\n/** @file\n *\n * Definitions shared by all platforms\n 
}
\\[\tablelinespacing]
$\alphabetkeeptopkwithval{100}$  
&        
\detokenize{
Tag: politics\n\nOn the afternoon of Wednesday, May 17th, I joined hundreds of thousands of other Amer
}
& 
\detokenize{
/*\nCopyright The Kubernetes Authors.\n\nLicensed under the Apache License, Version 2.0 (the "License")
}
\\[\tablelinespacing]
$\alphabetkeeptopkwithval{500}$  
&        
\detokenize{
Results of hyperthermic intraperitoneal chemotherapy versus hyperthermic intraperitoneal chemotherap
}
& 
\detokenize{
<?xml version="1.0" encoding="UTF-8"?>\n<!--  Copyright (C) 2013 The Android Open Source Project\n\n
}
\\[\tablelinespacing]
$\alphabetkeeptopkwithval{1000}$   
&        
\detokenize{
Peace love, Hope and Fasting.\n\nThursday, November 10, 2008\n\nOkay...A few weeks ago I had an interest
}
& 
\detokenize{
define("ace/mode/javascript_highlight_rules",["requ
ire","exports","module","ace/lib/oop","ace/mode/t
}
\\[\tablelinespacing]
$\alphabetkeeptopkwithval{5000}$  
&        
\detokenize{
West Lafayette\n\nIndiana’s unexpected first visit to baseball crowncases its home team’s success from
}
& 
\detokenize{
\n624 P.2d 352 (1981)\n105 Idaho 1002\nSue HANKS, Plaintiff-Appellant,\nv.\nHARTSDALE TOWNSHIP, Defendant
}
\\[\tablelinespacing]
$\alphabetkeeptopkwithval{10000}$
&        
\detokenize{
These algorithms have a "signature capture score"\nof about 3.\n
}
& 
\detokenize{
Beth Cooper\n\nNew York Times best-selling author Beth Cooper will deliver a lecture on poetry and cul
}
\\\midrule
$\alphabetkeeptoppwithval{0.01}$
&        
\detokenize{
Q:\n\nHow to get the value of a variable in a function?\n\nI have a function that is called from a butto
}
& 
\detokenize{
Q:\n\nHow to get the value of a variable in a function?\n\nI have a function that is called from a butto
}
\\[\tablelinespacing]
$\alphabetkeeptoppwithval{0.05}$
&        
\detokenize{
This invention relates to a semiconductor device and a method of manufacturing the same, and more pa
}
& 
\detokenize{
\n#include "qabstractnetworkmodel.h"\n
}
\\[\tablelinespacing]
$\alphabetkeeptoppwithval{0.25}$
&        
\detokenize{
The present invention relates to a semiconductor device and a method of manufacturing the same, and 
}
& 
\detokenize{
/*\n * Copyright (c) 2017-2018 THL A29 Limited, a Tencent company. All Rights Reserved.\n *\n * License
}
\\[\tablelinespacing]
$\alphabetkeeptoppwithval{0.5}$
&        
\detokenize{
K. T. P. S. B. P. C.\n\nK. T. P. S. B. P. C. is a small town in the Indian state of Uttar Pradesh. It 
}
& 
\detokenize{
import { IconDefinition, IconPrefix, IconName } from "@fortawesome/fontawesome-common-types";\nexport
}
\\[\tablelinespacing]
$\alphabetkeeptoppwithval{0.75}$
&        
\detokenize{
YAML 1.1\n\%TAG!u! tag:unity3d.com,2011:\n---!u!21 &2100000\nMaterial:\n  serializedVersion: 6\n  m_Objec
}
& 
\detokenize{
export { default as Slider } from './Slider';\n
}
\\[\tablelinespacing]
$\alphabetkeeptoppwithval{0.9}$
&        
\detokenize{
---\nabstract: 'We investigate the concept of the infinite divisibility of simple functions using som
}
& 
\detokenize{
Embodiments described herein relate to a device for testing nerve stimulation therapy for chronic pa
}
\\[\tablelinespacing]
$\alphabetkeeptopkwithval{0.95}$
&        
\detokenize{
Secretary of State Rex Tillerson, Defense Secretary Jim Mattis, CENTCOM Commander Army Gen. Lloyd Au
}
& 
\detokenize{
export * from './domain.module';\nexport { useLinkTo, registerLinkTo } from './utils';\n
}
\\[\tablelinespacing]
$\alphabetkeeptoppwithval{0.99}$
&        
\detokenize{
Associated Press\n\nSYRACUSE, N.Y. (AP) - A retired Northeastern University professor has been ordered
}
& 
\detokenize{
Along the dark, shadowy streets of ChicoMira moving south on Angioteque, cops have spotted three bro
}
\\
\bottomrule
\end{tabular}%
}
\caption{Texts generated with pythia-1.4b using local and global decoding. We present one randomly selected text for each configuration. The sequences are trimmed to 100 characters for readability.}
\label{tab:samples}
\end{table*}

\newpage

\section{Definitions and Useful Lemma}

For mathematical convenience, we now define local and global normalisation constants.
\begin{defin}\label{defn:local_constant}
    The \defn{local normalisation constant} (denoted $\constlocalnorm: \alphabet^{*}\to[0,1]$) is defined as:
    \begin{align}\label{eq:local_constant}
        &\constlocalnorm(\words) \defeq \sum_{\word \in \alphabeteos} \decdistunormed(\word \mid \words) = \decdistunormed(\eos \mid \words) + \sum_{\word \in \alphabet} \decdistunormed(\word \mid \words)
    \end{align}
\end{defin}
\begin{defin}\label{defn:global_constant}
    The \defn{global normalisation constant} (denoted $\constglobalnorm \in [0,1]$) and the \defn{context-specific global normalisation constant} (denoted $\constglobalnormcontext: \alphabet^{*} \to [0,1]$) are defined as:
    \begin{align}
        \constglobalnorm \defeq \sum_{\words' \in \alphabet^*} \prod_{t=1}^{|\words'|+1} \decdistunormed(\word'_t \mid \words'_{<t})
        ,\qquad\qquad
        \constglobalnormcontext(\words) \defeq \sum_{\words' \in \alphabet^*} \prod_{t=1}^{|\words'|+1} \decdistunormed(\word'_t \mid \words \circ \words'_{<t})
    \end{align}
\end{defin}
We note that $\constglobalnormcontext(\words)$ can be written recursively as:
\begin{subequations}
\begin{align}
    \constglobalnormcontext(\words) 
    &= \sum_{\words' \in \alphabet^*} \prod_{t=1}^{|\words'|+1} \decdistunormed(\word'_t \mid \words \circ \words'_{<t}) \\
    &= \decdistunormed(\eos \mid \words) + \sum_{\word \in \alphabet} \decdistunormed(\word \mid \words) \sum_{\words' \in \alphabet^*} \prod_{t=1}^{|\words'|+1} \decdistunormed(\word'_t \mid \words \circ \word \circ \words'_{<t}) \\
    &= \decdistunormed(\eos \mid \words) + \sum_{\word \in \alphabet} \decdistunormed(\word \mid \words)\,\constglobalnormcontext(\words \circ \word)
\end{align}
\end{subequations}
and that $\constglobalnorm = \constglobalnormcontext(\emptystring)$, where $\emptystring$ denotes an empty string.

Given these constants, we can rewrite local and global decoding algorithms, defined in \cref{eq:local_decoding,eq:global_decoding}, respectively, as:
\begin{align}
    &\decdist(\words) 
    = \prod_{t=1}^{|\words|+1} \frac{\decdistunormed(\word \mid \words_{<t})}{\sum_{\word' \in \alphabet} \decdistunormed(\word' \mid \words_{<t})} 
    = \frac{\prod_{t=1}^{|\words|+1} \decdistunormed(\word \mid \words_{<t})}{\prod_{t=1}^{|\words|+1} \constlocalnorm(\words_{<t})} \\
    &\decdistglobal(\words) 
    = \frac{\prod_{t=1}^{|\words|+1} \decdistunormed(\word_t \mid \words_{<t})}{\sum_{\words' \in \alphabet^*} \prod_{t=1}^{|\words|+1} \decdistunormed(\word_t \mid \words_{<t})} 
    = \frac{\prod_{t=1}^{|\words|+1} \decdistunormed(\word_t \mid \words_{<t})}{\constglobalnorm} 
\end{align}

We now prove a recursively-defined lower bound on $\constglobalnorm$ which will be useful later.
\begin{lemma} \label{lemma:bound_global_constant}
    The global normalisation constant's value is lower bounded by:
    \begin{align}
        \constglobalnorm \geq \left(\min_{\words \in \alphabet^{*}}\,\constlocalnorm(\words)\right)^{T} \left(\min_{\words \in \alphabet^{T}}\constglobalnormcontext(\words)\right)
    \end{align}
\end{lemma}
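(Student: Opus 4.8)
The plan is to prove the bound by unrolling the recursion for $\constglobalnormcontext$ one symbol at a time, from the empty prefix down to prefixes of length $\maxlength$. For each length $\ell \in \{0, \dots, \maxlength\}$, I would set $m_\ell \defeq \min_{\words \in \alphabet^{\ell}} \constglobalnormcontext(\words)$ (these minima are attained since $\alphabet$ is finite). Observe that $m_0 = \constglobalnormcontext(\emptystring) = \constglobalnorm$, while $m_{\maxlength} = \min_{\words \in \alphabet^{\maxlength}} \constglobalnormcontext(\words)$ is exactly the second factor on the right-hand side. Writing $c_{\min} \defeq \min_{\words \in \alphabet^{*}} \constlocalnorm(\words)$, the whole statement then reduces to the one-step inequality $m_\ell \geq c_{\min}\, m_{\ell+1}$, chained $\maxlength$ times.

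First I would record the auxiliary fact that $\constglobalnormcontext(\words) \leq 1$ for every $\words$. This holds because pruning only removes probability mass, so $\decdistunormed(\word \mid \words) \leq \ptheta(\word \mid \words)$; hence the sum defining $\constglobalnormcontext(\words)$ over all continuations is dominated term-by-term by the total mass $\sum_{\words' \in \alphabet^*} \ptheta(\words' \mid \words) = 1$ of a genuine conditional distribution. This upper bound is the crucial ingredient, as it lets me absorb the stray $\eos$ term of the recursion into the inductive bound.

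Concretely, I would fix $\words \in \alphabet^{\ell}$ and start from the recursion established in the excerpt,
\begin{align}
\constglobalnormcontext(\words) = \decdistunormed(\eos \mid \words) + \sum_{\word \in \alphabet} \decdistunormed(\word \mid \words)\,\constglobalnormcontext(\words \circ \word).
\end{align}
Each factor $\constglobalnormcontext(\words \circ \word)$ has $\words \circ \word \in \alphabet^{\ell+1}$ and is therefore at least $m_{\ell+1}$; moreover, since $m_{\ell+1} \leq 1$ by the auxiliary fact, I may also bound $\decdistunormed(\eos \mid \words) \geq \decdistunormed(\eos \mid \words)\, m_{\ell+1}$. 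Substituting both bounds and pulling out $m_{\ell+1}$ collapses the right-hand side to $m_{\ell+1}\,\big(\decdistunormed(\eos \mid \words) + \sum_{\word \in \alphabet}\decdistunormed(\word \mid \words)\big) = m_{\ell+1}\,\constlocalnorm(\words) \geq m_{\ell+1}\, c_{\min}$, where the last step uses that $c_{\min}$ lower-bounds $\constlocalnorm(\words)$ for every prefix. As $\words \in \alphabet^\ell$ was arbitrary, minimising over such $\words$ gives $m_\ell \geq c_{\min}\, m_{\ell+1}$, and telescoping from $\ell = 0$ to $\ell = \maxlength$ yields $\constglobalnorm = m_0 \geq c_{\min}^{\maxlength}\, m_{\maxlength}$, as claimed.

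I expect the only genuinely subtle point—and the main obstacle—to be justifying $\constglobalnormcontext(\words) \leq 1$ cleanly, since this upper bound is precisely what permits replacing $\decdistunormed(\eos \mid \words)$ by $\decdistunormed(\eos \mid \words)\, m_{\ell+1}$ and thereby reassembling the full local normalisation constant $\constlocalnorm(\words)$ inside the recursion. Everything after that is a routine telescoping induction. The remaining care is merely bookkeeping: confirming that the per-length minima are attained (immediate, as $\alphabet$ is finite) and that $c_{\min}$ is a legitimate lower bound on $\constlocalnorm(\words)$ at every step, which holds by definition of the minimum over $\alphabet^{*}$.
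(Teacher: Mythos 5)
Your proposal is correct and follows essentially the same route as the paper's proof: unroll the recursion for $\constglobalnormcontext$ one symbol at a time, use $\constglobalnormcontext \leq 1$ (which the paper leaves implicit via the codomain $[0,1]$ in its definition) to absorb the $\eos$ term and reassemble $\constlocalnorm(\words)$, then iterate $\maxlength$ times and pass to the minima. Your formulation via the per-length minima $m_\ell$ and a telescoping one-step inequality is a slightly cleaner packaging of the paper's nested-min expansion, but the mathematical content is identical.
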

\begin{proof}
We start this proof by lower bounding the context-specific global normalisation constant with a local normalisation constant's value:
\begin{subequations}
\begin{align}
    \constglobalnormcontext(\words) 
    &= \decdistunormed(\eos \mid \words) + \sum_{\word \in \alphabet} \decdistunormed(\word \mid \words)\,\constglobalnormcontext(\words \circ \word) \\
    &\geq \decdistunormed(\eos \mid \words) + \sum_{\word \in \alphabet} \decdistunormed(\word \mid \words)\,\left(\min_{\word \in \alphabet}\, \constglobalnormcontext(\words \circ \word)\right) \\
    &\geq \left(\decdistunormed(\eos \mid \words) + \sum_{\word \in \alphabet} \decdistunormed(\word \mid \words)\right)\,\left(\min_{\word \in \alphabet}\, \constglobalnormcontext(\words \circ \word)\right) \\
    &= \constlocalnorm(\words)\,\min_{\word \in \alphabet}\, \constglobalnormcontext(\words \circ \word)
\end{align}
\end{subequations}
We can now recursively expand this lower bound:
\begin{subequations}
\begin{align}
    \constglobalnormcontext(\words) 
    &\geq \constlocalnorm(\words)\,\min_{\word \in \alphabet}\, \constglobalnormcontext(\words \circ \word) \\
    &\geq \constlocalnorm(\words)\,\min_{\word_1 \in \alphabet}\,\left(\constlocalnorm(\words\circ\word_1) \,\min_{\word_2 \in \alphabet}\constglobalnormcontext(\words \circ \word_1 \circ \word_2)\right) \\
    &\geq \constlocalnorm(\words)\,\min_{\word_1 \in \alphabet}\,\left(\constlocalnorm(\words\circ\word_1) \,\min_{\word_2 \in \alphabet}\,\left(\constlocalnorm(\words\circ\word_1 \circ \word_2) \,\min_{\word_3 \in \alphabet}\constglobalnormcontext(\words \circ \word_1 \circ \word_2 \circ \word_3)\right)\right) \\
    &\geq \constlocalnorm(\words)\,
    \min_{\word_1 \in \alphabet}\,\bigg(\constlocalnorm(\words\circ\word_1) \,
    \cdots \\
    &\qquad\qquad \min_{\word_{T-1} \in \alphabet}\,\bigg(\constlocalnorm(\words\circ\word_1 \circ \cdots \circ \word_{T-1}) \,
    \min_{\word_T \in \alphabet}\constglobalnormcontext(\words \circ \word_1 \circ \cdots \circ \word_{T-1}\circ \word_{T})\bigg)\bigg) \nonumber \\
    &= 
    \min_{\word_1 \in \alphabet}\,\bigg(
    \cdots
    \min_{\word_{T-1} \in \alphabet}\,\bigg( \\
    &\qquad\quad \constlocalnorm(\words)\,\constlocalnorm(\words\circ\word_1) \,\cdots \constlocalnorm(\words\circ\word_1 \circ \cdots \circ \word_{T-1}) \,
    \min_{\word_T \in \alphabet}\constglobalnormcontext(\words \circ \word_1 \circ \cdots \circ \word_{T-1}\circ \word_{T})\bigg)\bigg) \nonumber 
    \!\!\!\!\!\!\!\!\!\!\!\!\!\!\!\!\!\!\!\!\!\! \\
    &= \min_{\words' \in \alphabet^{T-1}}\,\left(\prod_{t=0}^{T-1} \constlocalnorm(\words\circ\words'_{\leq t}) \,\min_{\word_T \in \alphabet}\constglobalnormcontext(\words \circ \words' \circ \word_T)\right) \\
\end{align}
\end{subequations}
Finally, we get the bound above by noting that 
$\min_{\word'' \in \alphabet} \constglobalnormcontext(\words \circ \words' \circ \word'') \leq \min_{\words'' \in \alphabet^{T}}\constglobalnormcontext(\words \circ \words'')$
and
$\constlocalnorm(\words\circ\words'_{\leq t}) \leq \min_{\words'' \in \alphabet^{*}}\,\constlocalnorm(\words\circ\words'')$:
\begin{subequations}
\begin{align}
    \constglobalnormcontext(\words) 
    &\geq \min_{\words' \in \alphabet^{T-1}}\,\left(\prod_{t=0}^{T-1} \constlocalnorm(\words\circ\words'_{\leq t}) \,\min_{\word_T \in \alphabet}\constglobalnormcontext(\words \circ \words' \circ \word_T)\right) \\
    &\geq \min_{\words' \in \alphabet^{T-1}}\,\left(\prod_{t=0}^{T-1} \min_{\words'' \in \alphabet^{*}}\,\constlocalnorm(\words\circ\words'') \right) \left(\min_{\words'' \in \alphabet^{T}}\constglobalnormcontext(\words \circ \words'')\right) \\
    &= \left(\min_{\words'' \in \alphabet^{*}}\,\constlocalnorm(\words\circ\words'')\right)^{T} \left(\min_{\words'' \in \alphabet^{T}}\constglobalnormcontext(\words \circ \words'')\right)
\end{align}
\end{subequations}
Replacing $\constglobalnormcontext(\words)$ with $\constglobalnorm = \constglobalnormcontext(\emptystring)$ completes the proof.
\end{proof}

\section{Proof of Lower-bound on Maximum Divergence between Global and Local Distributions (\Cref{lemma:kl_lower_bound})}
\label{app:proofkllowerbound}

\kllowerbound*
\begin{proof}
    This proof follows trivially from \Cref{lemma:kl_lower_bound_forward,lemma:kl_lower_bound_reverse} below.
\end{proof}

\myword{\wordone}{a}
\myword{\wordsone}{\mathbf{a}}
\myword{\wordtwo}{b}
\myword{\wordthree}{c_{1}}
\myword{\wordfour}{c_2}
\myword{\wordlast}{c_{|\alphabet|-2}}

\begin{restatable}{lemma}{kllowerboundreverse}
\label{lemma:kl_lower_bound_reverse}
    Let $\variationalfamily$ be a set including all $\maxlength$-maxlength language models $\ptheta(\words)$ (see \cref{defn:maxlength_lang_model}).
    There exist language models $\ptheta \in \variationalfamily$, whose decoding versions $\decdistglobal(\words)$ and $\decdist(\words)$ have a reverse $\kl$ bounded below by:
    \begin{align}
        \sup_{\ptheta \in \variationalfamily}
        \kl(\decdist(\words) \mid\mid \decdistglobal(\words))
        \in \Omega(\maxlength)
    \end{align}
\end{restatable}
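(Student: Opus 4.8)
The plan is to reduce the reverse $\kl$ to a statement purely about normalisation constants, and then exhibit an explicit family of models achieving linear growth. Using the rewritten forms of $\decdist$ and $\decdistglobal$ derived above, the two densities share the numerator $\prod_{t}\decdistunormed(\word_t\mid\words_{<t})$, so their ratio telescopes to $\decdist(\words)/\decdistglobal(\words)=\constglobalnorm/\prod_{t}\constlocalnorm(\words_{<t})$, and hence
\begin{equation*}
    \kl(\decdist \mid\mid \decdistglobal) = \log\constglobalnorm - \expect_{\words\sim\decdist}\!\Big[\sum_{t=1}^{|\words|+1}\log\constlocalnorm(\words_{<t})\Big].
\end{equation*}
Since every local constant obeys $\constlocalnorm(\words_{<t})\le 1$, the expectation term is non-negative. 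My goal is therefore to build a $\maxlength$-maxlength model for which (i) local decoding typically traverses $\Omega(\maxlength)$ contexts whose retained mass is a constant strictly below $1$, so the expectation term is $\Omega(\maxlength)$, while (ii) $\constglobalnorm$ stays bounded below by a positive constant, so that $\log\constglobalnorm=\Omega(1)$ cannot cancel the linear growth.

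For the construction I would use an alphabet with two distinguished symbols $\wordone,\wordtwo$ together with many filler symbols. From the empty context the model emits $\wordone$ with probability $\rho$ and $\wordtwo$ with probability $1-\rho$; after $\wordone$ it emits $\eos$ with probability $1$ (a length-one ``highway''), whereas after $\wordtwo$ it places a uniform distribution over the whole alphabet at every step until the cap $\maxlength$ forces $\eos$. I choose the parameters so that both $\wordone$ and $\wordtwo$ survive pruning at the root --- this holds for top-$\topkk$ whenever $\topkk\ge 2$, and for top-$\toppp$ provided $\rho<\toppp$ (so that $\wordtwo$ is still needed to reach cumulative mass $\toppp$) --- and so that each uniform step in the $\wordtwo$-branch retains only a constant fraction $q<1$ of the mass (on the order of $\topkk/|\alphabet|$ for top-$\topkk$, and of $\toppp$ for top-$\toppp$), which is ensured by taking $|\alphabet|$ large enough.

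With this model both quantities are immediate. Because the highway string $\langle\wordone\rangle$ is never pruned, $\constglobalnorm \ge \decdistunormed(\langle\wordone\rangle)=\rho$, so $\log\constglobalnorm\ge\log\rho$ is a constant. Under local decoding we enter the $\wordtwo$-branch with probability $1-\rho$, and every such string passes through exactly $\maxlength-1$ uniform contexts, each with local constant $q$; it therefore contributes $(\maxlength-1)\log(1/q)$ to the expectation term, while the highway string contributes $0$. Combining the two terms gives
\begin{equation*}
    \kl(\decdist \mid\mid \decdistglobal) \ge \log\rho + (1-\rho)\,(\maxlength-1)\,\log\tfrac{1}{q},
\end{equation*}
which lies in $\Omega(\maxlength)$ since $\rho,q\in(0,1)$ are constants independent of $\maxlength$. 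As the constructed model belongs to $\variationalfamily$ for every $\maxlength$, taking the supremum over $\ptheta\in\variationalfamily$ yields the claimed bound.

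The hard part is the tension between requirements (i) and (ii): keeping $\constglobalnorm$ large demands an unpruned string of non-vanishing unnormalised mass, yet local decoding must simultaneously spend most of its steps in heavily pruned, high-entropy contexts. A naive symmetric model makes all local constants equal, in which case $\decdist=\decdistglobal$ and the $\kl$ collapses to $0$; so the ``mass-carrying'' highway and the ``entropy-carrying'' branch must be genuinely separated. This is sharpest for top-$\toppp$, because pruning there retains exactly the high-probability tokens, so $\wordone$ can be kept while $\wordtwo$ also survives only under the condition $\rho<\toppp$; verifying the pruning behaviour of both top-$\topkk$ and top-$\toppp$ on each branch is the crux of the argument.
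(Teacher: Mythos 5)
Your proposal is correct and follows essentially the same route as the paper's proof: the same decomposition of the reverse $\kl$ into $\log\constglobalnorm$ plus the expected negative log of the accumulated local constants, and the same two-branch construction (a short high-mass ``highway'' string plus a long uniform branch), with your $\rho$ and $q$ playing the roles of the paper's $x$ and $\topkk/|\alphabeteos|$ (resp.\ $\toppp$). Your explicit condition $\rho<\toppp$ ensuring both root symbols survive top-$\toppp$ pruning is left implicit in the paper, but the argument and the final bound $\log\rho+(1-\rho)(\maxlength-1)\log\tfrac{1}{q}$ are the same.
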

\begin{proof}
We prove this by construction. Let $\alphabet = \{\wordone, \wordtwo, \wordthree, ..., \wordlast\}$ and $\ptheta(\words)$ be defined such that:
\begin{align}
    \ptheta(\words) = \left\{ \begin{array}{cr}
         x & \words = \wordone \\
         (1-x)\,\frac{1}{|\alphabet|}^{\maxlength-1} & \words \in \wordtwo \circ \alphabet^{\maxlength-1} \\
         0 & \texttt{else}
    \end{array}\right.
\end{align}
where, when applied to sets, $\circ$ represents elementwise concatenation, i.e., $\wordtwo \circ \alphabet^{\maxlength-1} = \{\wordtwo \circ \words' \mid \words' \in  \alphabet^{\maxlength-1}\}$.
We can get a lower bound for this LM's reverse $\kl$ as: 
\begin{subequations}
\begin{align}
    \kl(\decdist(\words) \mathop{\mid\mid} \decdistglobal(\words)) 
    &= \sum_{\words \in \alphabet^*} \decdist(\words) \log \frac{\decdist(\words)}{\decdistglobal(\words)} \\
    & =
    \expect_{\words \sim \decdist(\words)} \left[
    \log \frac{\constglobalnorm}{\prod_{t=1}^{|\words|+1} \constlocalnorm(\words_{<t})}
    \right] 
    \\
    & =
    \log \constglobalnorm + \expect_{\words \sim \decdist(\words)} \left[
    \log \frac{1}{\prod_{t=1}^{|\words|+1} \constlocalnorm(\words_{<t})}
    \right]
    \\
    & =
    \log \constglobalnorm + \sum_{\words \in \wordtwo \circ \alphabet^{\maxlength-1}} \decdist(\words)\, 
    \log \frac{1}{\prod_{t=1}^{|\words|+1} \constlocalnorm(\words_{<t})}
    \\
    & =
    \log \underbrace{\left(x + \underbrace{\sum_{\words \in \wordtwo \circ \alphabet^{\maxlength-1}} (1-x)\,\frac{1}{|\alphabet|}^{\maxlength-1}}_{\geq 0} \right)}_{\constglobalnorm} + 
    (1-x)\, \log \frac{1}{\prod_{t=1}^{\maxlength+1} \constlocalnorm(\words_{<t})}
    \\
    & \geq
    \log x + 
    (1-x)\, \log \frac{1}{\underbrace{\constlocalnorm(\emptystring)}_{=1} \cdot \prod_{t=2}^{\maxlength} \constlocalnorm(\words_{<t}) \cdot \underbrace{\constlocalnorm(\words)}_{=1}}
    \\
    & =
    \log x + 
    (1-x)\, \log \frac{1}{\prod_{t=2}^{\maxlength} \constlocalnorm(\words_{<t})}
    \\
    & =
    \log x + 
    (1-x)\, \sum_{t=2}^{\maxlength} \log \frac{1}{\constlocalnorm(\words_{<t})}
    \\
    & =
    \log x + 
    (1-x)\, (\maxlength-1) \log \frac{1}{\constlocalnorm(\words_{<t})} 
    \in \Omega(\maxlength)
\end{align}
\end{subequations}
The $\sup_{\ptheta \in \variationalfamily} \kl(\decdist(\words) \mid\mid \decdistglobal(\words))$ is greater or equal to this LM's $\kl$, and so is also bounded below. 
This completes the proof.
\end{proof}

\begin{restatable}{lemma}{kllowerboundforward}
\label{lemma:kl_lower_bound_forward}
    Let $\variationalfamily$ be a set including all $\maxlength$-maxlength language models $\ptheta(\words)$ (see \cref{defn:maxlength_lang_model}).
    There exist language models $\ptheta \in \variationalfamily$, whose decoding versions $\decdistglobal(\words)$ and $\decdist(\words)$ have a forward $\kl$ bounded below by:
    \begin{align}
        \sup_{\ptheta \in \variationalfamily}
        \kl(\decdistglobal(\words) \mid\mid \decdist(\words))
        \in \Omega(\maxlength)
    \end{align}
\end{restatable}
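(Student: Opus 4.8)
The plan is to prove this by explicit construction, mirroring \cref{lemma:kl_lower_bound_reverse} but tuning its free parameter differently. The starting point is the identity (obtained from the rewritings of $\decdist$ and $\decdistglobal$ in terms of normalisation constants)
\[
\kl\big(\decdistglobal(\words)\mid\mid\decdist(\words)\big)
= \log\frac{1}{\constglobalnorm}
+ \expect_{\words\sim\decdistglobal(\words)}\!\Big[\textstyle\sum_{t=1}^{|\words|+1}\log\constlocalnorm(\words_{<t})\Big],
\]
which follows from $\decdistglobal(\words)/\decdist(\words)=\big(\prod_{t}\constlocalnorm(\words_{<t})\big)/\constglobalnorm$. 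The first term is non-negative while the second is non-positive (each $\constlocalnorm(\words_{<t})\le 1$), so the goal reduces to exhibiting a $\ptheta\in\variationalfamily$ for which $\log\tfrac{1}{\constglobalnorm}\in\Omega(\maxlength)$ while the correction term is only $o(\maxlength)$. Intuitively, $\decdistglobal$ is an exponential tilt of $\decdist$ that favours \emph{un-pruned} strings, so I want a ``clean'' string (local constants all equal to $1$) that is globally dominant yet locally rare.

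For top-$\topkk$ (say $\topkk=2$), I would reuse the very construction of \cref{lemma:kl_lower_bound_reverse} --- one short string $\wordone$ with $\ptheta(\wordone)=x$ and the uniform family $\wordtwo\circ\alphabet^{\maxlength-1}$ of long strings carrying the remaining mass $1-x$ --- but now choose $x$ to be \emph{exponentially small} in $\maxlength$ (e.g.\ $x=2^{-\maxlength}$) together with a sufficiently large alphabet. Since $\decdistglobal\propto\decdistunormed=\ptheta$ on the surviving strings, and only $2^{\maxlength-1}$ of the long strings survive top-$2$ pruning (each retaining probability $(1-x)\,|\alphabet|^{-(\maxlength-1)}$), we get $\constglobalnorm = x + (1-x)\,(2/|\alphabet|)^{\maxlength-1}$. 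For $|\alphabet|$ large enough the first term dominates, giving $\constglobalnorm\le 2x$ and hence $\log\tfrac{1}{\constglobalnorm}\ge \maxlength\log 2 - \log 2\in\Omega(\maxlength)$. Crucially, $\decdistglobal$ concentrates on $\wordone$, whose local constants are all $1$, so it contributes $0$ to the correction term.

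It then remains to check that the long strings do not spoil the bound. Each surviving long string satisfies $\sum_{t}\log\constlocalnorm(\words_{<t})=(\maxlength-1)\log(2/|\alphabet|)$, and their total global mass is $\decdistglobal(\wordtwo\circ\alphabet^{\maxlength-1}) = (1-x)(2/|\alphabet|)^{\maxlength-1}/\constglobalnorm$, which is exponentially small because $\constglobalnorm\approx x \gg (2/|\alphabet|)^{\maxlength-1}$. Multiplying an exponentially small mass by a $\Theta(\maxlength)$ log-factor still yields $o(1)$, so the correction term is negligible and $\kl(\decdistglobal\mid\mid\decdist)\ge\maxlength\log 2 - o(\maxlength)\in\Omega(\maxlength)$. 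Taking the supremum over $\variationalfamily$ (which contains this $\maxlength$-dependent model for every $\maxlength$) closes the top-$\topkk$ case.

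The main obstacle is the top-$\toppp$ case, where this construction breaks: under top-$\toppp$ pruning with a fixed $\toppp<1$, a root symbol of vanishing probability $x<1-\toppp$ is simply discarded, so the rare clean escape cannot be a low-probability branch. To repair this I would instead let the rare string survive as the \emph{marginally retained} symbol --- arranging the conditional at the branch point so that the higher-probability symbols accumulate to just under $\toppp$, forcing top-$\toppp$ to include the small symbol to cross the threshold, while its locally renormalised probability $x/\constlocalnorm\approx x/\toppp$ stays small. Verifying that this marginal-inclusion structure both survives pruning and still lets $\decdistglobal$ concentrate on the clean branch (so that $\constglobalnorm$ remains exponentially small) is the delicate part; once that is established, the bookkeeping for the correction term proceeds exactly as in the top-$\topkk$ case.
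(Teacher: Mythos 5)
Your proposal is correct for the top-$\topkk$ case, but it reaches the bound by a genuinely different construction than the paper. You both start from the same decomposition $\kl(\decdistglobal \mid\mid \decdist) = \log\tfrac{1}{\constglobalnorm} + \expect_{\words\sim\decdistglobal}\big[\sum_t \log \constlocalnorm(\words_{<t})\big]$, but the paper makes the ``clean'' branch a length-$\maxlength$ string $\wordone_1\circ\cdots\circ\wordone_\maxlength$ with constant per-step branching probability $x \in (\tfrac{\topkk}{|\alphabet|}, 1)$, so that $\constglobalnorm \approx x^{\maxlength}$; extracting the $\Omega(\maxlength)$ rate then requires an explicit evaluation of $\constglobalnorm$ and of the expectation term via geometric and arithmetico-geometric series, followed by a limit computation showing $\kl/\maxlength \to -\log x$. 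You instead keep the reverse-KL lemma's single length-one clean string and push its probability down to $x = 2^{-\maxlength}$, so that $\log\tfrac{1}{\constglobalnorm} \geq \maxlength\log 2 - \log 2$ falls out immediately and the correction term is $o(1)$ because $\decdistglobal$ places only exponentially little mass on the pruned long strings. Your route is substantially more elementary and avoids all the series bookkeeping; the price is that the witnessing model's parameters must scale exponentially with $\maxlength$, whereas the paper's witness has $\maxlength$-independent conditionals. On top-$\toppp$: you are right that this is the delicate case, and in fact the paper does no better --- it proves only top-$\topkk$ and asserts ``a similar proof applies for top-$\toppp$.'' Do note, though, that your marginal-inclusion fix as sketched forces the residual mass $1-\toppp$ to be spread over symbols each of probability below $x = 2^{-\maxlength}$, which requires an alphabet of size $\Omega((1-\toppp)\,2^{\maxlength})$; a construction closer to the paper's (a length-$\maxlength$ clean branch with per-step probabilities chosen so that both branch symbols are retained by top-$\toppp$) avoids this blow-up and is probably the cleaner repair.
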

\begin{proof}
\allowdisplaybreaks
We now prove this by construction for top-$\topkk$, but note that a similar proof applies for top-$\toppp$.
Let $\alphabet = \{\wordone, \wordtwo, \wordthree, ..., \wordlast\}$ and $\ptheta(\words)$ be defined such that:
\begin{align}
    \ptheta(\words) = \left\{ \begin{array}{cr}
         x^T & \words = \wordone_1\circ \wordone_2 \circ \cdots \circ \wordone_T \\
         x^{t}\,(1-x)\,\frac{1}{|\alphabet|}^{\maxlength-t-1} & \words \in \wordone_1 \circ \cdots \circ \wordone_t \circ \wordtwo_{t+1} \circ \alphabet^{\maxlength-t-1} \\
         0 & \texttt{else}
    \end{array}\right.
\end{align}
Further, let $1 > x > \frac{\topkk}{|\alphabet|}$.
First, we simplify the value of the global normalisation constant for this model:
\begin{subequations}
\begin{align}
    \constglobalnorm 
    &= \sum_{\words' \in \alphabet^*} \prod_{t=1}^{|\words'|+1} \decdistunormed(\word'_t \mid \words'_{<t}) \\
    &= \prod_{t=1}^{|\wordsone_{1:\maxlength}|+1} \decdistunormed(\wordone \mid \wordsone_{1:t-1}) \\
    &\qquad\qquad + \sum_{\words \in \wordsone_{1:i}\circ\wordtwo \circ \alphabet^{\maxlength-i-1}} 
    \left(\left(\prod_{t=1}^{i} \decdistunormed(\wordone_t \mid \words_{<t})\right)
    \decdistunormed(\wordtwo_t \mid \words_{\leq i})
    \left(\prod_{t=i+1}^{\maxlength+1} \decdistunormed(\word_t \mid \words_{<t}) \right)\right) \nonumber \\
    &= x^{\maxlength} + \sum_{\words \in \wordsone_{1:i}\circ\wordtwo \circ \alphabet^{\maxlength-i-1}} x^{i}\,(1-x)\,\frac{1}{|\alphabet|}^{\maxlength-i-1}\,\prod_{t=1}^{\maxlength+1} \one\{\word_t \in \alphabetkeep(\words_{<t})\} \\
    &= x^{\maxlength} + \sum_{i=0}^{\maxlength-1} x^{i}\,(1-x)\,\frac{1}{|\alphabet|}^{\maxlength-i-1}\,\topkk^{\maxlength-i-1} \\
    &= x^{\maxlength} + (1-x)\,\frac{\topkk}{|\alphabet|}^{\maxlength-1}\,\sum_{i=0}^{\maxlength-1} (\frac{x\,|\alphabet|}{\topkk})^{i} \\
    &= x^{\maxlength} + (1-x)\,\frac{\topkk}{|\alphabet|}^{\maxlength-1}\,\frac{1-(\frac{x\,|\alphabet|}{\topkk})^{\maxlength}}{1-\frac{x\,|\alphabet|}{\topkk}}
\end{align}
\end{subequations}

Now, we simplify the value of $\expect_{\words \sim \decdistglobal(\words)} \left[\log \prod_{t=1}^{|\words|+1} \constlocalnorm(\words_{<t})\right]$:
\begin{subequations}
\begin{align}
    \expect_{\words \sim \decdistglobal(\words)} &\left[\log \prod_{t=1}^{|\words|+1} \constlocalnorm(\words_{<t})\right] \nonumber \\
    & =
    \sum_{\words \in \alphabet^{*}} \decdistglobal(\words)\, 
    \log \prod_{t=1}^{|\words|+1} \constlocalnorm(\words_{<t}) \\
    & =
    \sum_{\words \in \wordsone_{1:i}\circ\wordtwo \circ \alphabet^{\maxlength-i-1}} \decdistglobal(\words)\, 
    \log \prod_{t=1}^{|\words|+1} \constlocalnorm(\words_{<t}) \\
    & =
    \sum_{\words \in \wordsone_{1:i}\circ\wordtwo \circ \alphabet^{\maxlength-i-1}} 
    \frac{x^{i}\,(1-x)\,\frac{1}{|\alphabet|}^{\maxlength-t-1}}{
    \constglobalnorm}
    \log \prod_{t=1}^{|\words|+1} \constlocalnorm(\words_{<t}) \\
    & =
    \frac{\sum_{\words \in \wordsone_{1:i}\circ\wordtwo \circ \alphabet^{\maxlength-i-1}} x^{i}\,(1-x)\,\frac{1}{|\alphabet|}^{\maxlength-t-1}
    \log \prod_{t=1}^{|\words|+1} \constlocalnorm(\words_{<t})}{
    \constglobalnorm} \\
    & =
    \frac{(1-x)\,\frac{\topkk}{|\alphabet|}^{\maxlength-1}\,\sum_{i = 0}^{\maxlength-1} (\frac{x\,|\alphabet|}{\topkk})^{i}
    \log \prod_{t=1}^{|\words|+1} \constlocalnorm(\words_{<t})}{
    \constglobalnorm} \\
    & =
    \frac{(1-x)\,\frac{\topkk}{|\alphabet|}^{\maxlength-1}\,\sum_{i = 0}^{\maxlength-1} (\frac{x\,|\alphabet|}{\topkk})^{i}
    \log \left(\underbrace{\prod_{t=1}^{i} \constlocalnorm(\words_{<t})}_{=1}\,\prod_{t=i+1}^{\maxlength} \constlocalnorm(\words_{<t})\,\underbrace{\constlocalnorm(\words_{<\maxlength+1})}_{=1}\right)}{
    \constglobalnorm} \\
    & =
    \frac{(1-x)\,\frac{\topkk}{|\alphabet|}^{\maxlength-1}\,\sum_{i = 0}^{\maxlength-1} (\frac{x\,|\alphabet|}{\topkk})^{i}
    \log \prod_{t=i+1}^{\maxlength} \frac{\topkk}{|\alphabet|}}{
    \constglobalnorm} \\
    & =
    \frac{(1-x)\,\frac{\topkk}{|\alphabet|}^{\maxlength-1}\,\sum_{i = 0}^{\maxlength-1} (\frac{x\,|\alphabet|}{\topkk})^{i}
    (\maxlength - i)\,\log \frac{\topkk}{|\alphabet|}}{
    \constglobalnorm} \\
    & =
    \frac{(1-x)\,\frac{\topkk}{|\alphabet|}^{\maxlength-1}
    \left(
    \sum_{i = 0}^{\maxlength-1} (\frac{x\,|\alphabet|}{\topkk})^{i}
    (\maxlength + 1) -
    \sum_{i = 0}^{\maxlength-1} (\frac{x\,|\alphabet|}{\topkk})^{i}
    (i + 1)
    \right)
    \log \frac{\topkk}{|\alphabet|}}{
    \constglobalnorm} \\
    & =
    \frac{(1-x)\,\frac{\topkk}{|\alphabet|}^{\maxlength-1}
    \left(
    (\maxlength+1)\frac{1-(\frac{x\,|\alphabet|}{\topkk})^{\maxlength}}{1-\frac{x\,|\alphabet|}{\topkk}} -
    \frac{\maxlength(\frac{x\,|\alphabet|}{\topkk})^{\maxlength+1} - (\maxlength+1)(\frac{x\,|\alphabet|}{\topkk})^{\maxlength}+1}{(1-\frac{x\,|\alphabet|}{\topkk})^2}
    \right)
    \log \frac{\topkk}{|\alphabet|}}{
    \constglobalnorm} \\
    & =
    \frac{(1-x)\,\frac{\topkk}{|\alphabet|}^{\maxlength-1}
    \left(
    (\maxlength+1)\frac{1-\frac{x\,|\alphabet|}{\topkk}-(\frac{x\,|\alphabet|}{\topkk})^{\maxlength}+(\frac{x\,|\alphabet|}{\topkk})^{\maxlength+1}}{(1-\frac{x\,|\alphabet|}{\topkk})^2} -
    \frac{\maxlength(\frac{x\,|\alphabet|}{\topkk})^{\maxlength+1} - (\maxlength+1)(\frac{x\,|\alphabet|}{\topkk})^{\maxlength}+1}{(1-\frac{x\,|\alphabet|}{\topkk})^2}
    \right)
    \log \frac{\topkk}{|\alphabet|}}{
    \constglobalnorm} \\
    & =
    \frac{(1-x)\,\frac{\topkk}{|\alphabet|}^{\maxlength-1}
    \left(
    \frac{
    (\frac{x\,|\alphabet|}{\topkk})^{\maxlength+1} - (\maxlength+1)\frac{x\,|\alphabet|}{\topkk} + \maxlength}
    {(1-\frac{x\,|\alphabet|}{\topkk})^2}
    \right)
    \log \frac{\topkk}{|\alphabet|}}{
    \constglobalnorm}
\end{align}
\end{subequations}

Note that the $\kl$ we are interested in is defined as:
\begin{align}
    \kl(\decdistglobal(\words) \mathop{\mid\mid} \decdist(\words)) 
    & =
    \expect_{\words \sim \decdistglobal(\words)} \left[
    \log \frac{\prod_{t=1}^{|\words|+1} \constlocalnorm(\words_{<t})}{\constglobalnorm}
    \right]
\end{align}
so we can fill in the values above into it.
Now we prove this $\kl \in \Omega(\maxlength)$. 
To do so, we show that the $\lim_{\maxlength\to\infty} \frac{\kl}{\maxlength} = C$, for a $C > 0$.
First, we isolate the terms dependent on $\maxlength$ in the $\kl$'s equation.
\begin{subequations}
\begin{align}
    &\kl(\decdistglobal(\words) \mathop{\mid\mid} \decdist(\words))
    =
    \log \frac{1}{\constglobalnorm} +
    \expect_{\words \sim \decdistglobal(\words)} \left[\log \prod_{t=1}^{|\words|+1} \constlocalnorm(\words_{<t}) \right] \\
    & =
    \log \frac{1}{x^{\maxlength} + (1-x)\,\frac{\topkk}{|\alphabet|}^{\maxlength-1}\,\frac{1-(\frac{x\,|\alphabet|}{\topkk})^{\maxlength}}{1-\frac{x\,|\alphabet|}{\topkk}}} +
    \frac{(1-x)\,\frac{\topkk}{|\alphabet|}^{\maxlength-1}
    \left(
    \frac{
    (\frac{x\,|\alphabet|}{\topkk})^{\maxlength+1} - (\maxlength+1)\frac{x\,|\alphabet|}{\topkk} + \maxlength}
    {(1-\frac{x\,|\alphabet|}{\topkk})^2}
    \right)
    \log \frac{\topkk}{|\alphabet|}}{
    x^{\maxlength} + (1-x)\,\frac{\topkk}{|\alphabet|}^{\maxlength-1}\,\frac{1-(\frac{x\,|\alphabet|}{\topkk})^{\maxlength}}{1-\frac{x\,|\alphabet|}{\topkk}}} \\
    & =
    \log \frac{x^{-\maxlength}}{1 + \frac{1-x}{\frac{\topkk}{|\alphabet|}-x}\,\left((\frac{\topkk}{|\alphabet|}\frac{1}{x})^{\maxlength}-1\right)
    } +
    \frac{\frac{1-x}{\frac{\topkk}{|\alphabet|}}\,\frac{1}{(1-\frac{x\,|\alphabet|}{\topkk})^2}
    \left(
    (\frac{\topkk}{|\alphabet|}\frac{1}{x})^{-1}
    \!\!-\! (\maxlength+1) (\frac{\topkk}{|\alphabet|}\frac{1}{x})^{\maxlength-1}
    \!\!+\! \maxlength\,(\frac{\topkk}{|\alphabet|}\frac{1}{x})^{\maxlength}
    \right)
    \log \frac{\topkk}{|\alphabet|}}
    {1 + \frac{1-x}{\frac{\topkk}{|\alphabet|}-x}\,\left((\frac{\topkk}{|\alphabet|}\frac{1}{x})^{\maxlength}-1\right)} \\
    & =
    \log \frac{x^{-\maxlength}}{1 + \frac{1-x}{\frac{\topkk}{|\alphabet|}-x}\,\left((\frac{\topkk}{|\alphabet|}\frac{1}{x})^{\maxlength}-1\right)
    } +
    \frac{\frac{1-x}{\frac{\topkk}{|\alphabet|}\,\frac{\topkk}{|\alphabet|}\frac{1}{x}}\,\frac{1}{(1-\frac{x\,|\alphabet|}{\topkk})^2}
    \left(
    1
    - (\maxlength+1) (\frac{\topkk}{|\alphabet|}\frac{1}{x})^{\maxlength}
    + \maxlength\,(\frac{\topkk}{|\alphabet|}\frac{1}{x})^{\maxlength+1}
    \right)
    \log \frac{\topkk}{|\alphabet|}}
    {1 + \frac{1-x}{\frac{\topkk}{|\alphabet|}-x}\,\left((\frac{\topkk}{|\alphabet|}\frac{1}{x})^{\maxlength}-1\right)} \\
    & =
    \log \frac{x^{-\maxlength}}{1 + \frac{1-x}{\frac{\topkk}{|\alphabet|}-x}\,\left((\frac{\topkk}{|\alphabet|}\frac{1}{x})^{\maxlength}-1\right)
    } +
    \frac{\frac{(1-x)\,x}{\left(\frac{\topkk}{|\alphabet|}-x\right)^2}
    \left(
    1
    - (\maxlength+1) (\frac{\topkk}{|\alphabet|}\frac{1}{x})^{\maxlength}
    + \maxlength\,(\frac{\topkk}{|\alphabet|}\frac{1}{x})^{\maxlength+1}
    \right)
    \log \frac{\topkk}{|\alphabet|}}
    {1 + \frac{1-x}{\frac{\topkk}{|\alphabet|}-x}\,\left((\frac{\topkk}{|\alphabet|}\frac{1}{x})^{\maxlength}-1\right)} \\
    &=
    -\maxlength\,\log x - \log \left(1 + \frac{1-x}{\frac{\topkk}{|\alphabet|}-x}\,\left((\frac{\topkk}{|\alphabet|}\frac{1}{x})^{\maxlength}-1\right)\right) + \\
    &\qquad\qquad\qquad\qquad\qquad\qquad \frac{\frac{(1-x)\,x}{\left(\frac{\topkk}{|\alphabet|}-x\right)^2}
    \left(
    1
    - (\maxlength+1) (\frac{\topkk}{|\alphabet|}\frac{1}{x})^{\maxlength}
    + \maxlength\,(\frac{\topkk}{|\alphabet|}\frac{1}{x})^{\maxlength+1}
    \right)
    \log \frac{\topkk}{|\alphabet|}}
    {1 + \frac{1-x}{\frac{\topkk}{|\alphabet|}-x}\,\left((\frac{\topkk}{|\alphabet|}\frac{1}{x})^{\maxlength}-1\right)} \nonumber
\end{align}
\end{subequations}
We now analyse the limit:
$\lim_{\maxlength \to \infty} \frac{\kl(\decdistglobal(\words) \mathop{\mid\mid} \decdist(\words))}{\maxlength}$.
Note that, by construction, $1 > x > \frac{\topkk}{|\alphabet|}$.
We thus write:
\begin{subequations}
\begin{align}
    &\lim_{\maxlength \to \infty} 
    \frac{\kl(\decdistglobal(\words) \mathop{\mid\mid} \decdist(\words))}{\maxlength} \nonumber \\
    &=
    \lim_{\maxlength \to \infty} 
    \frac{-\maxlength\,\log x - \log \left(1 + \frac{1-x}{\frac{\topkk}{|\alphabet|}-x}\,\left((\frac{\topkk}{|\alphabet|}\frac{1}{x})^{\maxlength}-1\right)\right) +
    \frac{\frac{(1-x)\,x}{\left(\frac{\topkk}{|\alphabet|}-x\right)^2}
    \left(
    1
    - (\maxlength+1) (\frac{\topkk}{|\alphabet|}\frac{1}{x})^{\maxlength}
    + \maxlength\,(\frac{\topkk}{|\alphabet|}\frac{1}{x})^{\maxlength+1}
    \right)
    \log \frac{\topkk}{|\alphabet|}}
    {1 + \frac{1-x}{\frac{\topkk}{|\alphabet|}-x}\,\left((\frac{\topkk}{|\alphabet|}\frac{1}{x})^{\maxlength}-1\right)}}
    {\maxlength} \\
    &=
    \lim_{\maxlength \to \infty} 
    \frac{-\maxlength\,\log x}{\maxlength} +
    \frac{\frac{(1-x)\,x}{\left(\frac{\topkk}{|\alphabet|}-x\right)^2}
    \left(
    1
    - (\maxlength+1) (\frac{\topkk}{|\alphabet|}\frac{1}{x})^{\maxlength}
    + \maxlength\,(\frac{\topkk}{|\alphabet|}\frac{1}{x})^{\maxlength+1}
    \right)
    \log \frac{\topkk}{|\alphabet|}}
    {\maxlength\left(1 + \frac{1-x}{\frac{\topkk}{|\alphabet|}-x}\,\left((\frac{\topkk}{|\alphabet|}\frac{1}{x})^{\maxlength}-1\right)\right)} \\
    &= - \log x > 0
\end{align}
\end{subequations}
This completes the proof.
\end{proof}

\section{Upper-bounding the Divergence between Global and Local Distributions (\Cref{lemma:kl_upper_bound_both})}
\label{app:proofklupperbound_both}

\klupperboundboth*
\begin{proof}
    This proof follows trivially from \Cref{lemma:kl_upper_bound_topk,lemma:kl_upper_bound_topp} below.
\end{proof}

\subsection{A General Upper-bound}

In this section, we prove an upper-bound on the KL between both decoding distributions.
We then provide corollaries discussing how this bound is instantiated by top-$\topkk$ and top-$\toppp$ algorithms.

\begin{lemma} \label{lemma:kl_upper_bound}
    Let $\ptheta(\words)$ be a $\maxlength$-maxlength language model (see \cref{defn:maxlength_lang_model}).
    Now let $\decdistglobal(\words)$ and $\decdist(\words)$ be global and local decoding algorithms run on top of $\ptheta(\words)$.
    In this case, the forward $\kl$ between $\decdistglobal(\words)$ and $\decdist(\words)$ is bounded above by:
    \begin{align}
        \kl(\decdistglobal(\words) \mid\mid \decdist(\words)) \leq 
        T\,\log \frac{1}{\min_{\words \in \alphabet^{*}}\,\constlocalnorm(\words)}
    \end{align}
\end{lemma}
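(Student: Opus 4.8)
The plan is to rewrite the forward $\kl$ as an expectation of the log-ratio of the two distributions and exploit the closed forms for $\decdist(\words)$ and $\decdistglobal(\words)$ derived just above. Since both distributions share the same numerator $\prod_{t}\decdistunormed(\word_t \mid \words_{<t})$, their ratio telescopes to
\begin{align}
    \frac{\decdistglobal(\words)}{\decdist(\words)} = \frac{\prod_{t=1}^{|\words|+1} \constlocalnorm(\words_{<t})}{\constglobalnorm},
\end{align}
so that
\begin{align}
    \kl(\decdistglobal(\words) \mid\mid \decdist(\words))
    = \expect_{\words \sim \decdistglobal(\words)}\!\left[\sum_{t=1}^{|\words|+1} \log \constlocalnorm(\words_{<t})\right] - \log \constglobalnorm.
\end{align}
This cleanly separates the bound into a term collecting the local normalisation constants and a term involving the single global constant $\constglobalnorm$.

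For the first term I would note that every local constant is at most one: by \cref{defn:local_constant}, $\constlocalnorm(\words_{<t})$ is a sum of $\ptheta$-probabilities over the retained subset $\alphabetkeep(\words_{<t}) \subseteq \alphabeteos$, and $\sum_{\word \in \alphabeteos}\ptheta(\word \mid \words_{<t}) = 1$. Hence each summand $\log \constlocalnorm(\words_{<t}) \le 0$, the whole expectation is nonpositive, and we immediately get
\begin{align}
    \kl(\decdistglobal(\words) \mid\mid \decdist(\words)) \le \log \frac{1}{\constglobalnorm}.
\end{align}

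It then remains to lower bound $\constglobalnorm$, which is where the substance of the argument lies and which I expect to be the main obstacle. Here I would invoke \cref{lemma:bound_global_constant}, which gives
\begin{align}
    \constglobalnorm \ge \left(\min_{\words \in \alphabet^{*}} \constlocalnorm(\words)\right)^{\maxlength}\left(\min_{\words \in \alphabet^{\maxlength}} \constglobalnormcontext(\words)\right).
\end{align}
The key step is to show the second factor equals one for a $\maxlength$-maxlength model: for any $\words \in \alphabet^{\maxlength}$ of nonzero probability, \cref{defn:maxlength_lang_model} forces $\ptheta(\eos \mid \words) = 1$ and $\ptheta(\word \mid \words) = 0$ for all $\word \in \alphabet$, so $\eos$ is the only surviving continuation (and it is certainly retained by both top-$\topkk$ and top-$\toppp$), giving $\constglobalnormcontext(\words) = \decdistunormed(\eos \mid \words) = 1$.

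Substituting this back yields $\constglobalnorm \ge \left(\min_{\words \in \alphabet^{*}} \constlocalnorm(\words)\right)^{\maxlength}$, and taking logarithms gives
\begin{align}
    \log \frac{1}{\constglobalnorm} \le \maxlength\, \log \frac{1}{\min_{\words \in \alphabet^{*}} \constlocalnorm(\words)},
\end{align}
which combined with the previous inequality completes the proof. The only subtlety to handle carefully is the degenerate case where some length-$\maxlength$ prefixes have zero probability; such prefixes never arise under $\decdistglobal(\words)$ and are excluded from the relevant minima, so they do not affect the bound.
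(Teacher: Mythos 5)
Your proposal is correct and follows essentially the same route as the paper's own proof: the same telescoping of the ratio into local constants over $\constglobalnorm$, the same observation that the expectation term is nonpositive, and the same application of \cref{lemma:bound_global_constant} together with the fact that $\constglobalnormcontext(\words)=1$ for $\words\in\alphabet^{\maxlength}$ under a $\maxlength$-maxlength model. Your extra remark about zero-probability length-$\maxlength$ prefixes is a sensible clarification but does not change the argument.
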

\begin{proof}
First, we use the definition of the KL to show a bound:
\begin{subequations}
\begin{align}
    \kl(\decdistglobal(\words) \mathop{\mid\mid} \decdist(\words)) 
    &= \sum_{\words \in \alphabet^*} \decdistglobal(\words) \log \frac{\decdistglobal(\words)}{\decdist(\words)} \\
    & =
    \expect_{\words \sim \decdistglobal(\words)} \left[
    \log \frac{\frac{\prod_{t=1}^{|\words|+1} \decdistunormed(\word_t \mid \words_{<t})}{\constglobalnorm}}
    {\frac{\prod_{t=1}^{|\words|+1} \decdistunormed(\word \mid \words_{<t})}{\prod_{t=1}^{|\words|+1} \constlocalnorm(\words_{<t})}}
    \right] 
    & 
    \!\!\!\!\!\!\!\!\!\!\!\!\!\!\!\!\!\!\!\!\!\!
    \mathcomment{definition of $\decdistglobal(\words)$ and $\decdist(\words)$} \\
    & =
    \expect_{\words \sim \decdistglobal(\words)} \left[
    \log \frac{\prod_{t=1}^{|\words|+1} \constlocalnorm(\words_{<t})}{\constglobalnorm}
    \right] 
    & 
    \!\!\!\!\!\!\!\!\!\!\!\!\!\!\!\!\!\!\!\!\!\!\!\!\!\!\!\! 
    \mathcomment{cancel terms} 
    \\
    & =
    \expect_{\words \sim \decdistglobal(\words)} \left[
    \log \prod_{t=1}^{|\words|+1} \constlocalnorm(\words_{<t})\right]
    + \log \frac{1}{\constglobalnorm} 
    & 
    \!\!\!\!\!\!\!\!\!\!\!\!\!\!\!\!\!\!\!\!\!\!\!\!\!\!\!\! 
    \mathcomment{$\constglobalnorm$ does not depend on $\words$} \\
    & \leq \log \frac{1}{\constglobalnorm} 
    & 
    \!\!\!\!\!\!\!\!\!\!\!\!\!\!\!\!\!\!\!\!\!\!\!\!\!\!\!\! 
    \mathcomment{first term is $\leq 0$} \\
    & \leq \log \frac{1}{\left(\min_{\words \in \alphabet^{*}}\,\constlocalnorm(\words)\right)^{T} \left(\min_{\words \in \alphabet^{T}}\constglobalnormcontext(\words)\right)} 
    & 
    \!\!\!\!\!\!\!\!\!\!\!\!\!\!\!\!\!\!\!\!\!\!\!\!\!\!\!\! 
    \mathcomment{apply \cref{lemma:bound_global_constant}}\\
    & = T\,\log \frac{1}{\min_{\words \in \alphabet^{*}}\,\constlocalnorm(\words)} 
    + \log \frac{1}{\min_{\words \in \alphabet^{T}}\constglobalnormcontext(\words)}
\end{align}
\end{subequations}
Now note that $\constglobalnormcontext(\words) = \decdistunormed(\eos \mid \words) + \sum_{\word \in \alphabet} \decdistunormed(\word \mid \words)\,\constglobalnormcontext(\words \circ \word)$.
Since our language model is $\maxlength$-maxlengthed, then $\ptheta(\eos \mid \words) = 1$, which implies that $\decdistunormed(\eos \mid \words) = 1$ for all $\words \in \alphabet^{T}$.
We thus have that $\min_{\words \in \alphabet^{T}}\constglobalnormcontext(\words) = 1$. 
This completes the proof.
\end{proof}

\begin{lemma} \label{lemma:kl_reverse_upper_bound}
    Let $\ptheta(\words)$ be a $\maxlength$-maxlength language model (see \cref{defn:maxlength_lang_model}).
    Now let $\decdistglobal(\words)$ and $\decdist(\words)$ be global and local decoding algorithms run on top of $\ptheta(\words)$.
    In this case, the reverse $\kl$ between $\decdistglobal(\words)$ and $\decdist(\words)$ is bounded above by:
    \begin{align}
        \kl(\decdist(\words) \mid\mid \decdistglobal(\words)) \leq 
        T\,\log \frac{1}{\min_{\words \in \alphabet^{*}}\,\constlocalnorm(\words)}
    \end{align}
\end{lemma}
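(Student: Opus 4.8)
The plan is to mirror the forward-direction argument of \cref{lemma:kl_upper_bound}, but now take the expectation under $\decdist(\words)$ rather than $\decdistglobal(\words)$. First I would expand the reverse divergence using the explicit factorised forms of both decoding distributions. Since $\decdist(\words) = \frac{\prod_{t=1}^{|\words|+1}\decdistunormed(\word_t\mid\words_{<t})}{\prod_{t=1}^{|\words|+1}\constlocalnorm(\words_{<t})}$ and $\decdistglobal(\words) = \frac{\prod_{t=1}^{|\words|+1}\decdistunormed(\word_t\mid\words_{<t})}{\constglobalnorm}$, the unnormalised products cancel in the ratio $\decdist(\words)/\decdistglobal(\words)$, leaving
\begin{align}
    \kl(\decdist(\words)\mid\mid\decdistglobal(\words)) = \log\constglobalnorm + \expect_{\words\sim\decdist(\words)}\left[\log\frac{1}{\prod_{t=1}^{|\words|+1}\constlocalnorm(\words_{<t})}\right].
\end{align}

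Next I would bound each term separately. For the first, note that pruning can only remove probability mass, so $\decdistunormed(\word_t\mid\words_{<t})\leq\ptheta(\word_t\mid\words_{<t})$ and hence $\decdistunormed(\words)\leq\ptheta(\words)$ for every $\words$; summing over $\alphabet^{*}$ gives $\constglobalnorm\leq\sum_{\words}\ptheta(\words)=1$, so $\log\constglobalnorm\leq 0$. It therefore suffices to upper-bound the expectation, which I would do pointwise for every string in the support of $\decdist(\words)$. Concretely, I would lower-bound the product $\prod_{t=1}^{|\words|+1}\constlocalnorm(\words_{<t})$ by $\big(\min_{\words}\constlocalnorm(\words)\big)^{\maxlength}$, exploiting that each local constant satisfies $\constlocalnorm(\words)\in(0,1]$.

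The main obstacle---and the only place the $\maxlength$-maxlength assumption is used---is obtaining the exponent $\maxlength$ rather than the naive $\maxlength+1$. A string $\words$ drawn from $\decdist(\words)$ has $|\words|\leq\maxlength$, so its product contains $|\words|+1$ factors. When $|\words|<\maxlength$, there are at most $\maxlength$ factors, each at least $\min_{\words}\constlocalnorm(\words)$; since $\min_{\words}\constlocalnorm(\words)\leq 1$, the product is at least $\big(\min_{\words}\constlocalnorm(\words)\big)^{\maxlength}$. When $|\words|=\maxlength$, there are $\maxlength+1$ factors, but the final one has context $\words_{<\maxlength+1}=\words$ of length $\maxlength$; exactly as argued in \cref{lemma:kl_upper_bound}, \cref{defn:maxlength_lang_model} forces $\decdistunormed(\eos\mid\words)=1$ and hence $\constlocalnorm(\words)=1$, so this factor is trivial and the product is again at least $\big(\min_{\words}\constlocalnorm(\words)\big)^{\maxlength}$. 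Taking logarithms yields the pointwise bound $\log\frac{1}{\prod_{t}\constlocalnorm(\words_{<t})}\leq\maxlength\,\log\frac{1}{\min_{\words}\constlocalnorm(\words)}$, which is preserved under the expectation over $\decdist(\words)$. Combining this with $\log\constglobalnorm\leq 0$ gives the stated bound, completing the proof.
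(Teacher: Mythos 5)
Your proposal is correct and follows essentially the same route as the paper's proof: expand the reverse KL so the unnormalised products cancel, drop $\log\constglobalnorm\leq 0$, and lower-bound the product of local normalisation constants by $\left(\min_{\words\in\alphabet^{*}}\constlocalnorm(\words)\right)^{\maxlength}$. Your handling of the $|\words|+1$ versus $\maxlength$ factor count (using $\constlocalnorm(\words)=1$ for $\maxlength$-length prefixes) is a detail the paper's displayed chain leaves implicit, so your write-up is, if anything, slightly more careful.
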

\begin{proof}
For this proof, we start with the $\kl$'s definition and show the upper-bound:
\begin{subequations}
\begin{align}
    \kl(\decdist(\words) \mathop{\mid\mid} \decdistglobal(\words)) 
    &= \sum_{\words \in \alphabet^*} \decdist(\words) \log \frac{\decdist(\words)}{\decdistglobal(\words)} \\
    & =
    \expect_{\words \sim \decdist(\words)} \left[
    \log \frac{\constglobalnorm}{\prod_{t=1}^{|\words|+1} \constlocalnorm(\words_{<t})}
    \right] 
    \\
    & =
    \log \constglobalnorm + \expect_{\words \sim \decdist(\words)} \left[
    \log \frac{1}{\prod_{t=1}^{|\words|+1} \constlocalnorm(\words_{<t})}
    \right]
    \\
    & \leq
    \expect_{\words \sim \decdist(\words)} \left[
    \log \frac{1}{\prod_{t=1}^{|\words|+1} \constlocalnorm(\words_{<t})}
    \right]
    \\
    & \leq
    \expect_{\words \sim \decdist(\words)} \left[
    \log \frac{1}{\prod_{t=1}^{\maxlength} \min_{\words' \in \alphabet^{*}}\,\constlocalnorm(\words')}
    \right]
    \\
    & =
    \log \frac{1}{\prod_{t=1}^{\maxlength} \min_{\words \in \alphabet^{*}}\,\constlocalnorm(\words)}
    \\
    & =
    \maxlength\, \log \frac{1}{\min_{\words \in \alphabet^{*}}\,\constlocalnorm(\words)}
\end{align}
\end{subequations}
This concludes the proof.
\end{proof}

\subsection{An Upper-bound for Top-$\topkk$ (\Cref{lemma:kl_upper_bound_topk})}
\label{app:proofklupperbound_topk}

\begin{restatable}{lemma}{klupperboundtopk}
\label{lemma:kl_upper_bound_topk}
    When using top-$\topkk$ decoding, both $\kl$s (forward and reverse) between $\decdistglobal(\words)$ and $\decdist(\words)$ are bounded above by:
    \begin{subequations}
    \begin{align}
        \kl(\decdistglobal(\words) \mid\mid \decdist(\words)) \leq 
        \maxlength\,\log \frac{|\alphabeteos|}{\topkk}, \\
        \kl(\decdist(\words) \mid\mid \decdistglobal(\words)) \leq 
        \maxlength\,\log \frac{|\alphabeteos|}{\topkk}
    \end{align}
    \end{subequations}
    where $\ptheta(\words)$ is a $\maxlength$-maxlength language model.
\end{restatable}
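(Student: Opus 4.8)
The plan is to reduce both directions of the divergence to the general bounds already proved in \cref{lemma:kl_upper_bound,lemma:kl_reverse_upper_bound}, and then to control the single quantity on which those bounds depend: the smallest local normalisation constant. Both of those lemmas state that, for any $\maxlength$-maxlength language model, the forward and reverse $\kl$s between $\decdistglobal(\words)$ and $\decdist(\words)$ are each bounded above by $\maxlength\,\log\frac{1}{\min_{\words \in \alphabet^{*}}\constlocalnorm(\words)}$. Since $x \mapsto \log\frac{1}{x}$ is decreasing, it suffices to establish a lower bound on $\constlocalnorm(\words)$ that holds uniformly over all contexts, and the claimed inequalities will follow by substitution into the two general bounds.

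For the key step, I would first rewrite the local normalisation constant under top-$\topkk$ decoding. Starting from \cref{defn:local_constant} and the pruning rule in \cref{eq:decision_rule}, the retained mass is exactly
\[
    \constlocalnorm(\words) = \sum_{\word \in \alphabeteos} \ptheta(\word \mid \words)\,\one\{\word \in \alphabetkeeptopk(\words)\} = \sum_{\word \in \alphabetkeeptopk(\words)} \ptheta(\word \mid \words).
\]
Because $\ptheta(\cdot \mid \words)$ is a genuine distribution over the $|\alphabeteos|$ symbols of $\alphabeteos$, and $\alphabetkeeptopk(\words)$ collects its $\topkk$ largest entries, the average of the retained probabilities is at least the average of all $|\alphabeteos|$ probabilities, namely $\frac{1}{|\alphabeteos|}$. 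Hence the retained sum is at least $\topkk \cdot \frac{1}{|\alphabeteos|}$, giving the uniform bound $\constlocalnorm(\words) \geq \frac{\topkk}{|\alphabeteos|}$ for every $\words$; equality is attained by a uniform conditional, so the bound is tight.

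Substituting this lower bound into the two general results then yields
\[
    \kl(\decdistglobal(\words) \mid\mid \decdist(\words)) \leq \maxlength\,\log\frac{|\alphabeteos|}{\topkk}, \qquad \kl(\decdist(\words) \mid\mid \decdistglobal(\words)) \leq \maxlength\,\log\frac{|\alphabeteos|}{\topkk},
\]
which is the statement of the lemma.

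I do not anticipate a serious obstacle: the whole argument rests on the averaging (pigeonhole) inequality that the $\topkk$ largest coordinates of a probability vector of dimension $|\alphabeteos|$ carry at least a $\frac{\topkk}{|\alphabeteos|}$ fraction of the total mass. The only point needing a little care is the boundary behaviour at length $\maxlength$: for a $\maxlength$-maxlength model the final conditional places all its mass on $\eos$, so top-$\topkk$ retains mass $1 \geq \frac{\topkk}{|\alphabeteos|}$ there and the uniform lower bound is unaffected. This also confirms the identity $\pmin = \frac{\topkk}{|\alphabeteos|}$ invoked in \cref{lemma:kl_upper_bound_both}.
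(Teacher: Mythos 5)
Your proposal is correct and follows essentially the same route as the paper: both lower-bound the local normalisation constant by $\frac{\topkk}{|\alphabeteos|}$ via the observation that the $\topkk$ largest entries of a probability vector over $\alphabeteos$ carry at least a $\frac{\topkk}{|\alphabeteos|}$ fraction of the mass, and then substitute into the general bounds of \cref{lemma:kl_upper_bound,lemma:kl_reverse_upper_bound}. Your write-up is in fact slightly more explicit than the paper's, which asserts the key mass bound with a ``clearly'' and handles the reverse direction by saying the same logic applies.
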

\begin{proof}
    For convenience, we first rewrite the definition of the set of strings unpruned by top-$k$ here:
    \begin{align}
        \alphabetkeep(\words_{<t}) = &\argmax_{\alphabetkeep' \subseteq \alphabeteos} \sum_{\word \in \alphabetkeep'} \ptheta(\word \mid \words_{<t}),\,\, 
        \mathrm{s.t.}\, |\alphabetkeep'| = k
    \end{align}
    Top-$k$'s $\alphabetkeep(\words_{<t})$ is thus defined as the largest probability $k$-sized subset of $\alphabeteos$.
    This set clearly has at least probability $\frac{k}{|\alphabeteos|}$.
    We can thus bound the local constant's value as:
    \begin{align}
        \constlocalnorm(\words) 
        = \sum_{\word \in \alphabeteos} \decdistunormed(\word \mid \words) 
        = \sum_{\word \in \alphabeteos} \ptheta(\word \mid \words)\, \one\{\word \in \alphabetkeep(\words)\}  
        \geq \frac{k}{|\alphabeteos|}
    \end{align}
    We can now apply this inequality to the bound in \cref{lemma:kl_upper_bound}:
    \begin{subequations}
    \begin{align}
        \kl(\decdistglobal(\words) \mid\mid \decdist(\words)) 
        &\leq 
        \maxlength\,\log \frac{1}{\min_{\words \in \alphabet^{*}}\,\constlocalnorm(\words)} \\
        &\leq 
        \maxlength\,\log \frac{|\alphabeteos|}{k}
    \end{align}
    \end{subequations}
    The same logic applies to the $\kl(\decdist(\words) \mid\mid \decdistglobal(\words))$.
    This completes the proof.
\end{proof}

\subsection{An Upper-bound for Top-$\toppp$ (\Cref{lemma:kl_upper_bound_topp})}
\label{app:proofklupperbound_topp}

\begin{restatable}{lemma}{klupperboundtopp}
\label{lemma:kl_upper_bound_topp}
    Let $\ptheta(\words)$ be a $\maxlength$-maxlength language model (see \cref{defn:maxlength_lang_model}).
    Now let $\decdistglobal(\words)$ and $\decdist(\words)$ be global and local decoding algorithms based on top-$\toppp$ (as in \cref{defn:topp_decoding}).
    In this case, both $\kl$s (forward and reverse) between $\decdistglobal(\words)$ and $\decdist(\words)$ are bounded above by:
    \begin{subequations}
    \begin{align}
        \kl(\decdistglobal(\words) \mid\mid \decdist(\words)) \leq 
        \maxlength\,\log \frac{1}{\toppp}, \\
        \kl(\decdist(\words) \mid\mid \decdistglobal(\words)) \leq 
        \maxlength\,\log \frac{1}{\toppp}
    \end{align}
    \end{subequations}
\end{restatable}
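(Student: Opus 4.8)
The plan is to mirror the structure of the top-$\topkk$ proof (\Cref{lemma:kl_upper_bound_topk}) almost verbatim, since both bounds specialise the same general result. First I would invoke the two general upper bounds already established: \Cref{lemma:kl_upper_bound} for the forward direction and \Cref{lemma:kl_reverse_upper_bound} for the reverse direction. Each of these states that, for any $\maxlength$-maxlength language model,
\begin{align}
    \kl(\decdistglobal(\words) \mid\mid \decdist(\words)) &\leq \maxlength\,\log \frac{1}{\min_{\words \in \alphabet^{*}}\,\constlocalnorm(\words)}, \nonumber
\end{align}
and symmetrically for $\kl(\decdist(\words) \mid\mid \decdistglobal(\words))$. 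Hence the entire task reduces to producing a context-independent lower bound on the local normalisation constant $\constlocalnorm(\words)$ under top-$\toppp$ decoding.

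The key step is to bound $\constlocalnorm(\words)$ using the defining property of the top-$\toppp$ pruning function. By \Cref{defn:topp_decoding}, the set $\alphabetkeeptopp(\words)$ is chosen as the smallest subset of $\alphabeteos$ subject to the constraint $\sum_{\word \in \alphabetkeeptopp(\words)} \ptheta(\word \mid \words) \geq \toppp$. Since the unnormalised pruned distribution $\decdistunormed$ retains exactly the model probabilities of the words inside this set and zeros out the rest, I would write
\begin{align}
    \constlocalnorm(\words) = \sum_{\word \in \alphabeteos} \decdistunormed(\word \mid \words) = \sum_{\word \in \alphabetkeeptopp(\words)} \ptheta(\word \mid \words) \geq \toppp. \nonumber
\end{align}
Because this inequality holds for every context $\words \in \alphabet^{*}$, it also holds for the minimiser, giving $\min_{\words \in \alphabet^{*}}\,\constlocalnorm(\words) \geq \toppp$.

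Substituting this lower bound into both general KL bounds yields $\kl(\decdistglobal(\words) \mid\mid \decdist(\words)) \leq \maxlength\,\log \frac{1}{\toppp}$ and likewise for the reverse KL, completing the proof. There is no genuine obstacle here: the only point requiring any care is recognising that the constraint in the top-$\toppp$ definition guarantees at least $\toppp$ mass \emph{by construction}, so the bound on $\constlocalnorm(\words)$ is immediate and uniform over contexts. This is, if anything, cleaner than the top-$\topkk$ case, where one must argue that the $\topkk$ highest-probability tokens carry mass at least $\frac{\topkk}{|\alphabeteos|}$; for top-$\toppp$ the threshold $\toppp$ is exactly the quantity being lower-bounded.
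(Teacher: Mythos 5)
Your proposal is correct and follows essentially the same route as the paper: both reduce the claim to the general bounds of \cref{lemma:kl_upper_bound} (and its reverse counterpart) and then observe that the top-$\toppp$ constraint guarantees $\constlocalnorm(\words) \geq \toppp$ uniformly over contexts. If anything, you are slightly more explicit than the paper in invoking \cref{lemma:kl_reverse_upper_bound} for the reverse direction, where the paper merely remarks that the same logic applies.
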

\begin{proof}
    For convenience, we first rewrite the definition of the set of strings unpruned by top-$\toppp$ here:
    \begin{align}
        \alphabetkeep(\words_{<t}) = &\argmin_{\alphabetkeep' \subseteq \alphabeteos} \alphabetkeep',\quad \mathrm{s.t.}\sum_{\word \in \alphabetkeep'} \ptheta(\word \mid \words_{<t}) \geq \toppp
    \end{align}
    Top-$\toppp$'s $\alphabetkeep(\words_{<t})$ is thus defined as the smallest subset of $\alphabeteos$ which has at least probability $\toppp$.
    We can thus bound the local constant's value as:
    \begin{align}
        \constlocalnorm(\words) 
        = \sum_{\word \in \alphabeteos} \decdistunormed(\word \mid \words) 
        = \sum_{\word \in \alphabeteos} \ptheta(\word \mid \words)\, \one\{\word \in \alphabetkeep(\words)\}  
        \geq \toppp
    \end{align}
    We can now apply this inequality to the bound in \cref{lemma:kl_upper_bound}:
    \begin{subequations}
    \begin{align}
        \kl(\decdistglobal(\words) \mid\mid \decdist(\words)) 
        &\leq 
        \maxlength\,\log \frac{1}{\min_{\words \in \alphabet^{*}}\,\constlocalnorm(\words)} \\
        &\leq 
        \maxlength\,\log \frac{1}{\toppp}
    \end{align}
    \end{subequations}
    The same logic applies to the $\kl(\decdist(\words) \mid\mid \decdistglobal(\words))$.
    This completes the proof.
\end{proof}

\end{document}